\documentclass[journal]{IEEEtran}
\usepackage[final]{changes}

\usepackage{cite}

\ifCLASSINFOpdf
\else
\fi
\usepackage{amsmath}
\usepackage{algorithm}
\usepackage{algpseudocode}
\ifCLASSOPTIONcompsoc
 \usepackage[caption=false,font=normalsize,labelfont=sf,textfont=sf]{subfig}
\else
 \usepackage[caption=false,font=footnotesize]{subfig}
\fi
\usepackage{graphicx}
\usepackage{amsfonts}
\usepackage{hyperref}
\hypersetup{
    colorlinks=true,
    linkcolor=blue,
    filecolor=magenta,      
    urlcolor=cyan,
    pdftitle={},
    pdfpagemode=FullScreen,
    }

\usepackage{xcolor}

\usepackage{amsmath}
\usepackage{amssymb}
\usepackage{amsfonts}
\usepackage{mathtools}
\usepackage{amsthm}
\usepackage{multicol}

\newtheorem{prop}{Proposition}

\theoremstyle{definition}
\newtheorem{definition}{Definition}
\newtheorem*{remark}{Remark}

\newcommand{\vc}[1]{{\mathbf #1}}
\newcommand{\vct}[1]{\pmb{#1}}
\newcommand{\ma}[1]{{\mathbf #1}}

\newcommand{\conjugate}{^{\star}}

\newcommand{\Tr}{\mathrm{Tr}}

\begin{document}
%
\title{Statistical Testing on Directed Graphs by\\ Surrogate Data Generation}
%
%
%

\author{Chun Hei Michael~Chan,~\IEEEmembership{Student Member,~IEEE}, %
        Alexandre~Cionca,~\IEEEmembership{Student Member,~IEEE}, 
     \\
    Dimitri~Van~De~Ville,~\IEEEmembership{Fellow,~IEEE}

\thanks{CHM. Chan, A. Cionca and D. Van De Ville are with the Neuro-X Institute, Ecole polytechnique fédérale de Lausanne, and the Department of Radiology and Medical Informatics, University of Geneva, Switzerland. E-mail: chunheimichael.chan@epfl.ch; dimitri.vandeville@epfl.ch}%
\thanks{This work was supported by the Swiss National Science Foundation, Sinergia project ``Precision mapping of electrical brain network dynamics with application to epilepsy'', Grant number 209470.}%
\thanks{Manuscript received XXX; revised XXX.}}

%
%

\markboth{Submitted to IEEE Transactions on Signal and Information Processing over Networks}%
{}
%



\maketitle

\begin{abstract}
In recent years, graph signal processing has emerged as a powerful framework at the intersection of signal processing and graph theory, providing tools for the analysis of signals defined on nodes while accounting for their relationships represented by edges. These tools have been successfully applied to various settings, including statistical hypothesis testing. In particular, non-parametric approaches based on surrogate generation have been proposed for signals on undirected graphs. However, they are yet to be extended to directed graphs. In this work, we first revisit the notion of stationary graph signals on directed graphs. Specifically, and through the eigendecomposition of the graph shift operator, we define directed graph wide-sense stationary signals. Then, we propose a new framework to generate surrogate graph signals that preserve covariance structure under stationarity assumptions. Null distributions of the test metric can then be constructed from these surrogates and serve as a reference for the empirical data. Finally, we provide guiding examples and an application on real data, in which we compare the performance of our framework with existing techniques for undirected graphs or based on naive permutation, demonstrating feasibility and superiority of the proposed approach.
\end{abstract}

\begin{IEEEkeywords}
Graph signal processing, \deleted{graph spectral decomposition, }graph signal stationarity, \deleted{phase randomization, }surrogate data generation, statistical hypothesis testing, non-parametric testing
\end{IEEEkeywords}

%
\IEEEpeerreviewmaketitle

\section{Introduction}
%
%
%
%
\IEEEPARstart{G}{raph} signal processing (GSP) has emerged as a compelling methodology for managing data expressed on structured domains; e.g., in the context of social, biological, technological networks. GSP considers graphs and graph signals in a common framework to revisit classical signal processing operations \cite{shuman_emerging_2013, ortega_graph_2018, marques_signal_2020}. In the probabilistic setting, where graph signals are considered as realizations of random processes, GSP is essentially limited to undirected graphs, which motivates this work to provide a framework to enable statistical hypothesis testing on directed graphs. We emphasize that, in contrast to the extensive body of work on directionality inference for causal graph discovery~\cite{cui_topology_2024, sanchez_diffusion_2022, shaska_causal_2025, ajorlou2025build}, the present study addresses statistical hypothesis testing for assessing signals observed on a predefined graph. Several differences also exist in the type of graphs that each framework considers. 

In conventional signal processing, a random process in the time domain is said to be weakly stationary when the first two moments, mean and autocorrelation, do not vary with respect to time. Equivalently, its covariance matrix is diagonalized by the discrete Fourier transform (DFT) basis~\cite{brockwell_time_1991}. The latter concept has been used for defining graph wide-sense stationarity (GWSS) for undirected graphs~\cite{girault_stationary_2015, perraudin_stationary_2017, marques_stationary_2017}; i.e., a random process on the graph is GWSS if and only if its covariance matrix is diagonalized by the eigenvectors of the graph shift operator (GSO), commonly the adjacency matrix or the graph Laplacian, that can be considered as the graph Fourier transform (GFT). Stationarity of graph signals has been leveraged in a variety of applications, notably Wiener-based graph signal denoising, the forecasting of graph signal timeseries and graph structure learning~\cite{cheung_graph_2018,isufi_forecasting_2019, egilmez2018graph, buciulea_graph_2023}. For non-parametric statistical testing, under the stationarity assumption, we proposed a scheme to generate surrogate graph signals~\cite{pirondini_spectral_2016} based on sign randomization of the GFT coefficients. For a chosen test metric, the value from the empirical data can then be compared against the null distribution obtained from surrogates. 

Recently, the notion of stationarity has been extended to directed graphs~\cite{iraji_wss_2025} leading to directed GWSS (DGWSS). Here, the covariance matrix can be expressed as a left–right product of a block-diagonal matrix by the generalized eigenvector matrix of the GSO and its Hermitian transpose. However, this formulation relies on the Jordan Normal Form (JNF) of the GSO, which presents two major limitations: the JNF is numerically unstable and computationally inefficient. Yet, most graphs are strongly connected and thus come with a diagonalizable directed GSO~\cite{sevi_harmonic_2023}, and, if not, they can be made so by a minimal perturbation~\cite{seifert_digraph_2021}. This provides a complex-valued eigenbasis, but that is not necessarily orthogonal. This allows to obtain meaningful phase information that can be leveraged for operations such as the graph Hilbert transform~\cite{chan_hilbert_2025, venkitaraman_hilbert_2019}. Surrogate generation on directed graphs has not yet been addressed, except for the special case of complex-valued Hermitian graphs~\cite{belda_new_2019}.

In this work, we first revisit the notion of directed graph wide sense stationarity (DGWSS) through the eigendecomposition of the GSO and extend properties from GWSS to DGWSS. Next, we study white noise (WN) on the graph, a particular type of stationary signal with flat power spectral density, highlighting some striking differences between WN on undirected and directed graphs. We then present the framework for non-parametric testing, leveraging our proposed surrogate generation scheme. Throughout all experiments, we will compare our framework against surrogates generated on the undirected graph, and surrogates from naive permutations, demonstrating that directionality is meaningful and useful to be accounted for. We also show results for experimental data from temperature measurements on a sensor network. 

\added{The remainder of the paper is organized as follows. In Sect.~\ref{sect:gft}, we introduce the directed graph signal processing framework. Then, in Sect.~\ref{sect:statio}, we revisit the definition and properties of GWSS and its extension to directed graphs. We focus in Sect.~\ref{sect:white-noise} on white noise for directed graphs and discuss its differences with the undirected setting. Next, in Sect.~\ref{sect:surrogate}, we present the proposed non-parametric testing framework together with the surrogate generation scheme. Numerical experiments on synthetic and real-world datasets are presented in Sect.~\ref{sect:experimentals}, including temperature data on a sensor network. Finally, we conclude the paper and outline future research directions.}
\section{Notations}
Lower-case bold letters indicate vectors; e.g., $\vc x$ where $\vc x[n]$ is the $n$-th element. When the vector indicates a random variable $\vc x$, different realizations will be denoted with a subscript as $\vc x_k$, and surrogates generated from such a realization with a superscript as $\vc x_k^{(l)}$. Further on, upper-case bold letters are used for matrices; e.g., $\ma U$  with $\ma U[n,m]$ denoting the $(n,m)$-th entry of $\ma U$. The notations $(.)^*$, $(.)^T$, $(.)^H$, $(.)^{-1}$, $(.)^{-H}$ denote conjugate, transpose, Hermitian, inverse, and inverse Hermitian, respectively. We use $j$ to refer to the complex number such that $j^2=-1$. With $||.||_2$ and $||.||_F$, we refer to \replaced{$L^2$}{L2}-norm and Frobenius norm, respectively. We use \added{$\bar{(\cdot)}$ to refer to matrices related to the undirected graph.}

\section{Graph Fourier Transform}
\label{sect:gft}

\subsection{Undirected Graph}
Consider an undirected graph with $N$ nodes represented by its symmetric adjacency matrix $\bar{\ma A}$, which can be eigendecomposed into
\begin{equation}
  \bar{\ma A}=\bar{\ma U}\bar{\ma \Lambda}\bar{\ma U}^T,  
\end{equation}
where $\bar{\ma U}$ contains the real-valued eigenvectors $[\bar{\vc u}_k]_{k=1,\ldots,N}$ that form an orthonormal basis with associated eigenvalues on the diagonal of $\bar{\ma \Lambda}$. From this decomposition, the GFT of a graph signal $\vc x$ is defined as $\hat{\vc x}={\bar{\ma U}}^T {\vc x}$.

Applying the GFT brings the initial graph signal into the spectral domain, where spectral filtering can then be performed as the multiplication by a diagonal matrix $\bar{\ma G}\in\mathbb{R}^{N\times N}$, before being brought back into the signal domain:
\begin{equation}
   \vc y = \bar{\ma U}\bar{\ma G}\bar{\ma U}^T\vc x.
\end{equation}
\subsection{Directed Graph}
For a directed graph with $N$ nodes, characterized by the (non-symmetric) adjacency matrix $\ma A$, the eigendecomposition is not guaranteed and generally one can revert to the Jordan normal form (JNF)~\cite{sandryhaila_discrete_2013}. The matrix $\ma A$ then takes the following form, $\ma A=\ma V \ma J\ma V^{-1}$, with $\ma J$ a block diagonal matrix composed of $K$ Jordan blocks $(\ma J_k)_{k=1,\ldots,K}$ each associated with the eigenvalue $\lambda_k$ of the following form:
\begin{equation}
    \ma J = \begin{bmatrix}
    \ma J_1 & & \\
    & \ddots & \\
    & & \ma J_K
    \end{bmatrix}, \text{where} \ \ma J_k = 
    \begin{bmatrix}
    \lambda_k & 1 & & \\
    & \lambda_k & \ddots & \\
    & & \ddots & 1 \\
    & & & \lambda_k
    \end{bmatrix}.
\end{equation}
The columns of the matrix $\ma V$ are the generalized eigenvectors of $\ma A$. Filtering in spectral domain is defined as 
\begin{equation}
    \vc y = \ma V \ma G^{\ma J} \ma V^{-1}\vc x,
\end{equation}
where $\ma G^{\ma J}$ is a block diagonal matrix defined through a  complex-valued polynomial function $g$~\cite{sandryhaila_discrete_2013} as
\begin{equation}
    \ma G^{\ma J} = \begin{bmatrix}
    \ma G^{\ma J}_1 & & \\
    & \ddots & \\
    & & \ma G^{\ma J}_K
    \end{bmatrix}, \qquad \text{where} \  \ma G^{\ma J}_k = g(\ma J_k).
\end{equation}

While it might look natural to consider the JNF, it is well known that its computation is unfeasible even for graphs of moderate size ~\cite{stewart2001matrix}. In addition, for many graphs of practical interest, the eigendecomposition of $\ma A$ does exist and, if not, a minimal perturbation to $\ma A$ can be applied~\cite{seifert_digraph_2021} to allow for the diagonalization of $\ma A$. Hence, we consider the eigendecomposition as 
\begin{equation}
  \ma A=\ma U\ma \Lambda \ma U^{-1},
\end{equation}
where the eigenvalues $\lambda_k = \ma \Lambda[k,k]$ are either real-valued or complex conjugate pairs, with associated real-valued or complex conjugate eigenvectors, respectively. The GFT of a real-valued graph signal $\vc x$ is defined as $\hat{\vc x}=\ma U^{-1}{\vc x}$. For pairs of indices $(k_1, k_2)$ associated with complex conjugate eigenvalues/eigenvectors pairs, the corresponding GFT coefficients satisfy $\hat{\vc x}[k_1]=\hat{\vc x}[k_2]^\star$.

In this case, filtering in the spectral domain is defined as 
\begin{equation}
    \vc y = \ma U \ma G\ma U^{-1}\vc x,
\end{equation}
with $\ma G\in \mathbb{C}^{N\times N}$ being a complex-valued diagonal matrix \cite{sandryhaila_discrete_2014}. 

\section{Stationary Signals on Graphs}
\label{sect:statio}
Stationarity of graph signals has been defined and studied on multiple occasions \cite{perraudin_stationary_2017, girault_translation_2015, iraji_wss_2025}. In this section, we revisit the definition of Graph Wide Sense Stationary (GWSS) through the eigendecomposition of the directed adjacency.

\subsection{Undirected Graph Stationary} 
Let $\vc x \in \mathbb{R}^N$ be a random multivariate variable that we further on refer to as a random graph signal. We can then define the first- and second-order statistical moments of $\vc x$ as  the mean $\vct \mu_{\vc x}[n]=\mathbb{E}[\vc x[n]]$ and the covariance matrix $\ma H_{\vc x}=\mathbb{E}[({\vc x}- \vct\mu_{\vc x})({\vc x}-\vct\mu_{\vc x})^H]$, respectively. Without loss of generality, we assume throughout the paper that $\vct\mu_{\vc x}=\vc 0$. While \cite{perraudin_stationary_2017} defines GWSS of $\vc x$ through conditions on mean and covariance, they also provide the following equivalent definition.
\begin{definition}[GWSS] \label{ustatio}
    A random graph signal $\vc x$ is GWSS on an undirected graph if and only if there exists $\hat{\bar{\ma H}}_\vc x$ such that its covariance matrix can be factorized as
    \begin{equation} 
        {\ma H}_{\vc x}= \bar{\ma U}\hat{\bar{\ma H}}_{\vc x}\bar{\ma U}^T,
    \end{equation}
where $\hat{\bar{\ma H}}_{\vc x}\in\mathbb{R}^{N\times N}$ is diagonal. In particular, the graph power spectral density (PSD) of $\vc x$ is defined as $\hat{\bar{\vc h}}_{\vc x}\in\mathbb{R}^N$: 
\begin{align*}
    \hat{\bar{\vc h}}_{\vc x}[n] = \hat{\bar{\ma H}}_{\vc x}[n, n],
\end{align*}
diagonal entries of $\hat{\bar{\ma H}}_{\vc x}$. 
\end{definition}

\begin{remark}
Furthermore, by rewriting Eq.~(\ref{ustatio}), we obtain 
\begin{equation}
    \hat{\bar{\ma H}}_{\vc x} = \bar{\ma U}^T{\ma H}_{\vc x} \bar{\ma U} = \mathbb{E}[ \hat{\vc x} \hat{\vc x}^H]
\end{equation}
being the graph spectral covariance matrix. Now since $\hat{\bar{\ma H}}_{\vc x}$ is diagonal, the spectral components of $\vc x$ are uncorrelated. When $\vc x$ is not stationary, $\hat{\bar{\ma H}}_{\vc x}$ is still the graph spectral covariance, however, it is not diagonal anymore.
\end{remark}

Given that the diagonal structure of $\hat{{\ma H}}_{\vc x}$ is a key property of a GWSS. Characterizing how diagonal the graph spectral covariance matrix is, can be used to characterize the stationarity of the random variable. Following this, we define the ``stationary level''~\cite{perraudin_stationary_2017}.

\begin{definition}[GWSS Stationary Level] \label{stationary-level-undirected}
For a random variable $\vc x$, its covariance matrix ${\ma H}_\vc x$, the associated graph spectral covariance matrix $\hat{\bar{\ma H}}_\vc x$ and the PSD $\hat{\bar{\vc h}}_\vc x$, the stationary level of $\vc x$ is defined as
    \begin{equation}
        \bar{\kappa}({\ma H}_\vc x) = \frac{||\hat{\bar{\vc h}}_\vc x||_2}{||\hat{\bar{\ma H}}_\vc x||_F}.
    \end{equation}
\end{definition}

In practice, we propose estimators for the covariance matrix and the graph spectral covariance matrix as follows. Given $K$ realizations of the random graph signal $\vc x$, denoted as $\vc m_k, k=1,\ldots,K$, we define the empirical covariance matrix $\ma H_\vc m$ as
\begin{equation}
    \ma H_\vc m = \frac{1}{K}\sum_{k=1}^K \vc m_k \vc m_k^H,
\end{equation}
and the empirical graph spectral covariance matrix $\hat{\ma H}_\vc m$ as
\begin{equation}
    \hat{\bar{\ma H}}_\vc m = \bar{\ma U}^H \ma H_\vc m \bar{\ma U} = \frac{1}{K}\sum_{k=1}^K \hat{\vc m}_k \hat{\vc m}_k^H,
\end{equation}
with $\hat{\vc m}_k = \bar{\ma U}^T \vc m_k$ being the GFT of $\vc m_k$. It follows that the empirical PSD is defined as $\hat{\bar{\vc h}}_\vc m[k] = \hat{\bar{\ma H}}_\vc m[k,k]$. Furthermore , we obtain the empirical stationary level $\bar{\kappa}(\ma H_\vc m)$ by replacing $\ma H_\vc x$ and $\hat{\bar{\ma H}}_\vc x$ in Definition~\ref{stationary-level-undirected} with their empirical counterparts $\ma H_\vc m$ and $\hat{\bar{\ma H}}_\vc m$, respectively. 

\begin{figure*}[hbt!]
    \centering
\captionsetup[subfigure]{justification=centering}
  \subfloat[Bi-cycle graph\\WN variances\label{auto_graph1}]{%
        \includegraphics[width=0.29\linewidth]{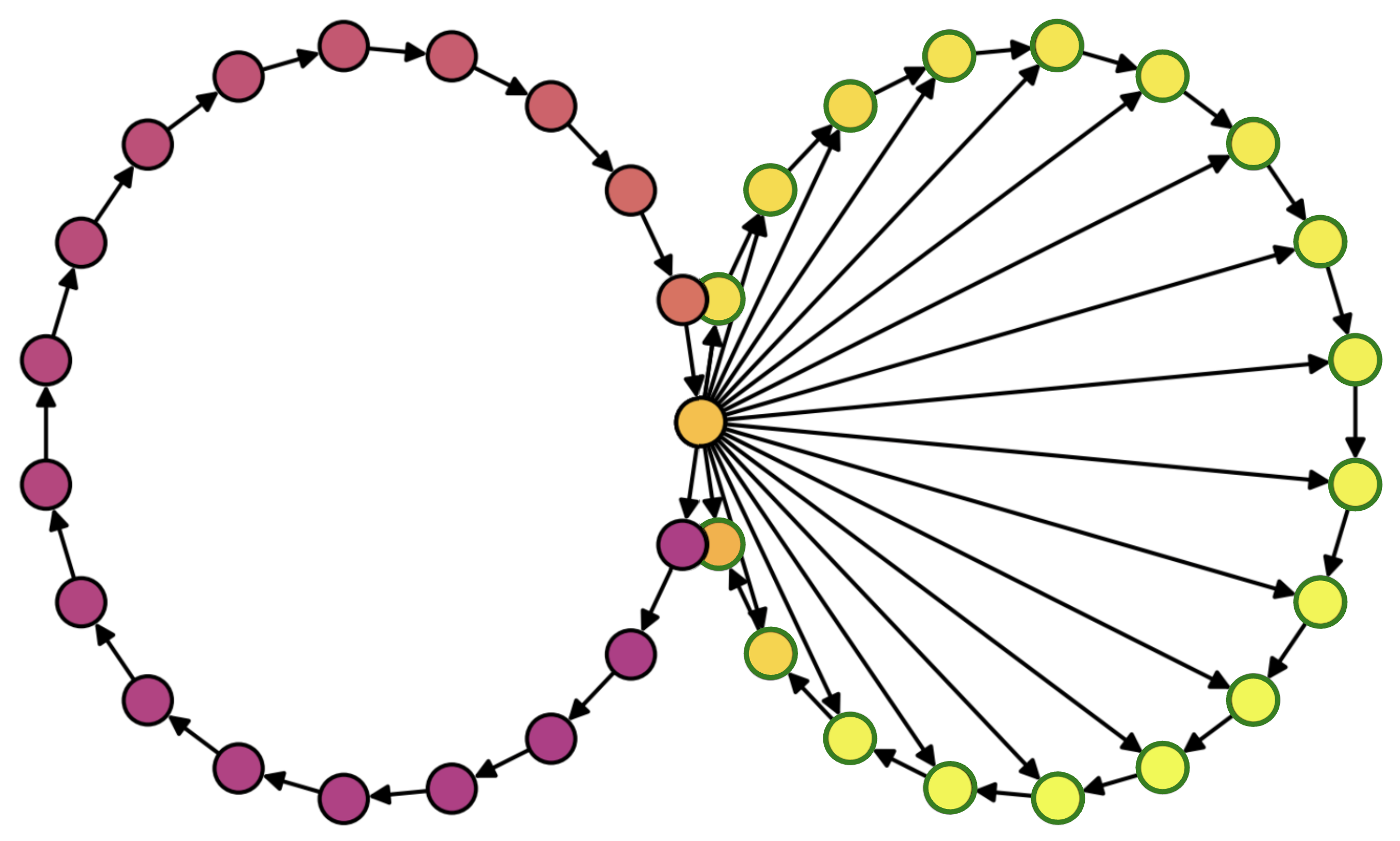}}
    \hfill
  \subfloat[Bi-cycle graph\\WN covariance matrix\label{wncovar_graph1}]{%
       \includegraphics[width=0.218\linewidth]{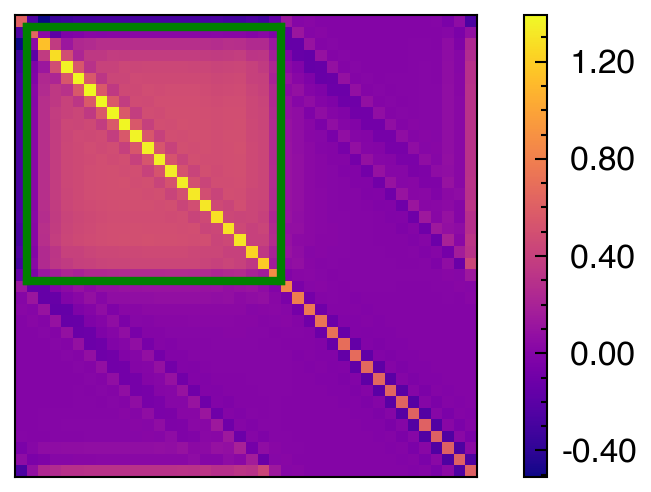}}
    \hfill
  \subfloat[USA graph\\WN variances\label{auto_graph2}]{%
        \includegraphics[width=0.26\linewidth]{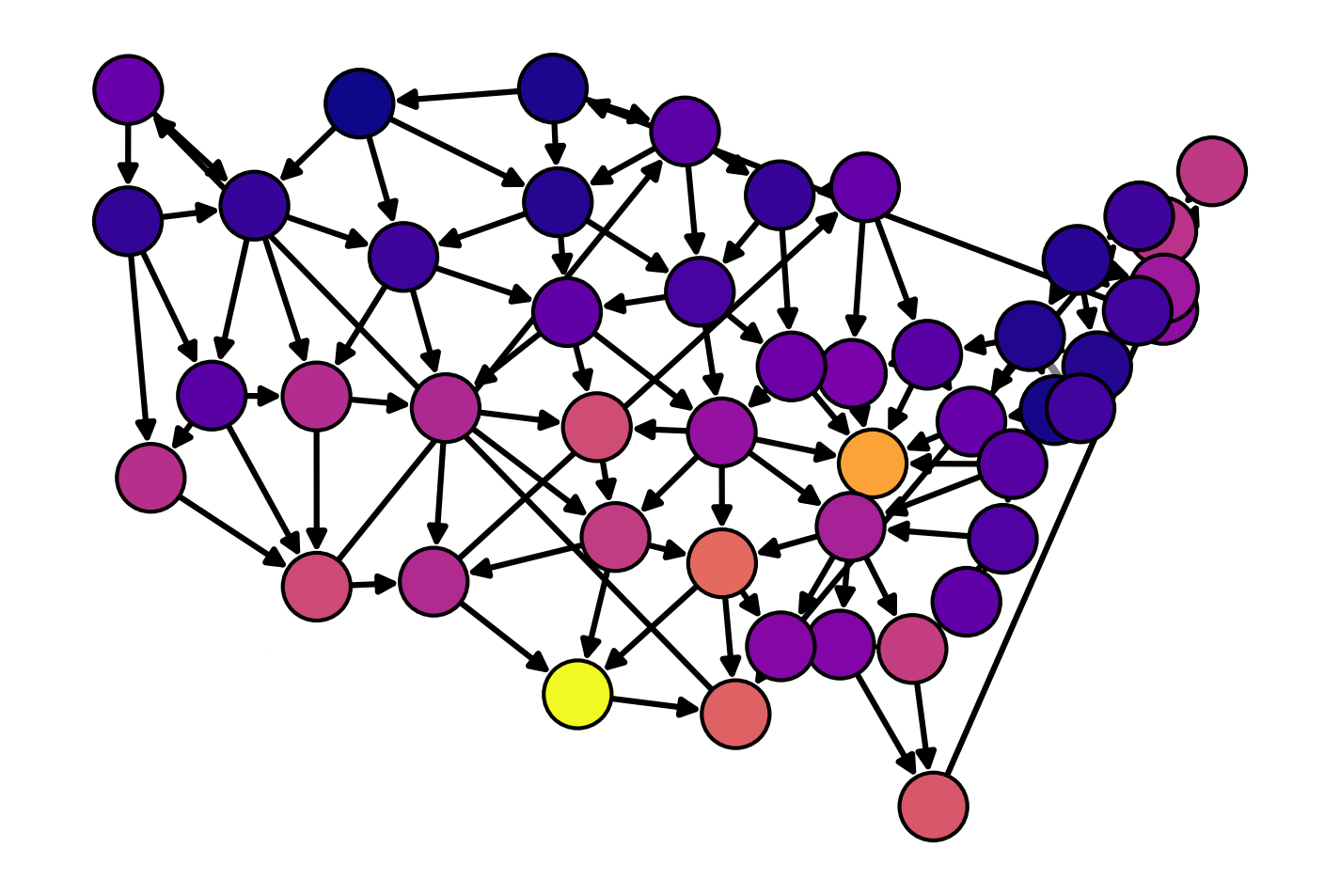}}
    \hfill
          \subfloat[USA Graph\\WN covariance matrix\label{wncovar_graph2}]{%
        \includegraphics[width=0.222\linewidth]{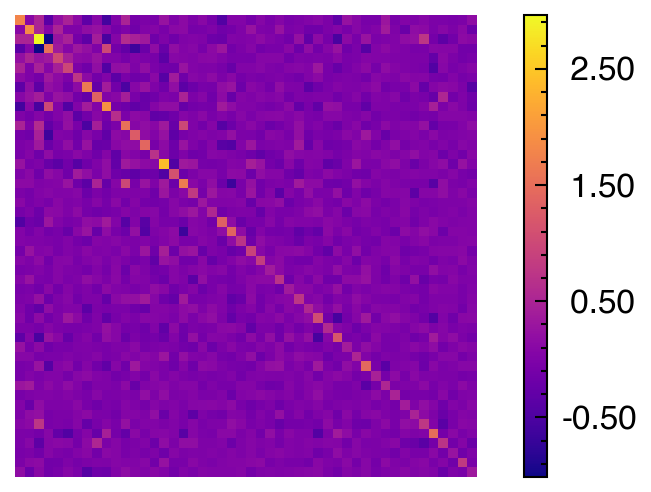}}
  \caption{\label{fig:cov} Examples of WN on directed graphs. (a)~Bi-cycle graph indicating nodal variances associated with WN. (b)~Covariance matrix associated with WN. The green rectangle corresponds to all nodes on the right cycle of the bi-cycle graph. 
 (c)~USA graph indicating nodal variances associated with WN. (d)~Covariance matrix associated with WN.}
\end{figure*}

\subsection{Directed Graph Stationary}
We first remind the notion of GWSS processes on directed graph introduced by \cite{iraji_wss_2025}. Then, we revisit the concept through the eigendecomposition of the graph shift operator instead of the JNF. We provide the definitions and properties of stationarity to directed graphs.
\begin{definition}[GWSS - Jordan Decomposition]
    Let $\ma A=\ma V \ma J\ma V^{-1}$ be the Jordan Decomposition of the adjacency matrix $\ma A$. The authors of \cite{iraji_wss_2025} state that a random graph signal is GWSS for the JNF of $\ma A$ if and only if there is a Jordan block diagonal matrix $\hat{\ma H}^{\ma J}_{\vc x}$ that relates to the covariance matrix $\ma H_{\vc x}$ as 
    \begin{equation}
        \ma H_{\vc x} = \ma V\hat{\ma H}^{\raisebox{-0.76ex}{$\scriptstyle \ma J$}}_{\vc x}\ma V^H.
    \end{equation}
\end{definition} Reformulating this definition for $\ma A$ diagonalizable leads to the generalized eigenvectors being replaced with the proper eigenvectors, and the block diagonal matrix turns into a regular diagonal matrix, which brings us to the following definition of DGWSS.
\begin{definition}[DGWSS] \label{dstatio}
    A random graph signal is Directed GWSS (DGWSS) if and only if there exists a diagonal matrix $\hat{\ma H}_{\vc x}\in\mathbb{C}^{N\times N}$ such that its covariance matrix can be expanded into:
    \begin{equation}
    \ma H_{\vc x}= \ma U\hat{\ma H}_{\vc x}\ma U^H. 
    \end{equation}
The diagonal entries of $\hat{\ma H}_{\vc x}$ contain the graph PSD $\hat{\vc h}_{\vc x}\in\mathbb{R}^N$: 
\begin{equation}
    \hat{\vc h}_{\vc x}[n] = \hat{\ma H}_{\vc x}[n, n].
\end{equation}
\end{definition}
Since $\hat{\ma H}_{\vc x}$ is diagonal, the spectral components of $\vc x$ are uncorrelated.

Given the definition of DGWSS, we revisit the following property of interest, enunciated by Proposition~\ref{filter-PSD}.
\begin{prop} \label{filter-PSD}
    Let $\ma G$ be a spectral filter and ${\vc x}\in \mathbb{R}^N$ a DGWSS random graph signal. The filtered random variable $\vc y$ by  the spectral filter $\ma G$ remains DGWSS and its PSD is given by 
    \begin{align*}
        \hat{\vc h}_{\vc y}[n] = |\ma G[n,n]|^2 \hat{\vc h}_{\vc x}[n].
    \end{align*}
\end{prop}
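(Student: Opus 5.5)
The plan is a direct computation of the covariance of $\vc y$, followed by reading off its spectral form from Definition~\ref{dstatio}. By the filtering definition in Sect.~\ref{sect:gft}, the filtered signal is $\vc y = \ma U\ma G\ma U^{-1}\vc x$, a linear transform of $\vc x$. Since $\vct\mu_{\vc x}=\vc 0$, linearity of expectation gives $\vct\mu_{\vc y}=\vc 0$. Its covariance is therefore
\begin{equation*}
\ma H_{\vc y}=\mathbb{E}[\vc y\vc y^H]=\ma U\ma G\ma U^{-1}\,\mathbb{E}[\vc x\vc x^H]\,\ma U^{-H}\ma G^H\ma U^H .
\end{equation*}

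Next I would substitute the DGWSS factorization $\ma H_{\vc x}=\ma U\hat{\ma H}_{\vc x}\ma U^H$. The key cancellation is $\ma U^{-1}\ma U=\ma I$ on the left and $\ma U^H\ma U^{-H}=(\ma U^{-1}\ma U)^H=\ma I$ on the right. This yields
\begin{equation*}
\ma H_{\vc y}=\ma U\,\big(\ma G\hat{\ma H}_{\vc x}\ma G^H\big)\,\ma U^H .
\end{equation*}
Because $\ma G$, $\hat{\ma H}_{\vc x}$ and $\ma G^H$ are all diagonal, the middle product $\hat{\ma H}_{\vc y}:=\ma G\hat{\ma H}_{\vc x}\ma G^H$ is diagonal. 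Its entries are $\ma G[n,n]\,\hat{\vc h}_{\vc x}[n]\,\ma G[n,n]^\star=|\ma G[n,n]|^2\hat{\vc h}_{\vc x}[n]$. By Definition~\ref{dstatio}, $\vc y$ is then DGWSS with the stated PSD.

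The one point needing care is the identity $\mathbb{E}[\vc y\vc y^H]=\ma H_{\vc y}$ when $\vc y$ may be complex. The statement takes $\vc x\in\mathbb{R}^N$, but $\ma G$ is a general complex diagonal matrix, so $\vc y$ need not be real. This causes no difficulty: the paper defines the covariance with the Hermitian transpose, so the computation above is valid for complex $\vc y$ as well.

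If one also wants $\vc y$ to be real, the proof would add the assumption that $\ma G$ respects conjugate pairs, i.e. $\ma G[k_1,k_1]=\ma G[k_2,k_2]^\star$ whenever $(k_1,k_2)$ index a pair of complex-conjugate eigenvalues. The argument itself does not depend on this.

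Uniqueness of the diagonal factor is not needed, since Definition~\ref{dstatio} only asks for existence. The only potential obstacle, therefore, is not invoking orthogonality of $\ma U$, which fails for directed graphs. The cancellation uses only invertibility, so the argument goes through.
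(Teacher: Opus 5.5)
Your proposal is correct and follows essentially the same route as the paper: expand $\mathbb{E}[\vc y\vc y^H]$ for $\vc y=\ma U\ma G\ma U^{-1}\vc x$, use $\ma U^{-1}\ma H_{\vc x}\ma U^{-H}=\hat{\ma H}_{\vc x}$ (which needs only invertibility of $\ma U$), and read off the diagonal middle factor $\ma G\hat{\ma H}_{\vc x}\ma G^H$ with entries $|\ma G[n,n]|^2\hat{\vc h}_{\vc x}[n]$. Your remarks on complex-valued $\vc y$ and on conjugate-pair-respecting filters are sound points that the paper leaves implicit.
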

\begin{proof}
Given that \(\vc y=\ma U \ma G \ma U^{-1}\vc x\), we can write
\begin{align*}
\mathbb{E}[\vc y\vc y^H]
&= \ma U \ma G \ma U^{-1}\,\mathbb{E}[\vc x\vc x^H]\,\ma U^{-H}\ma G^H\ma U^H\\
&= \ma U\Bigl(\ma G\underbrace{\ma U^{-1}{\ma H}_{\vc x}\ma U^{-H}}_{=\;\hat{\ma H}_{\vc x}}\ma G^H\Bigr)\ma U^H\\
&= \ma U\bigl(\ma G\hat{\ma H}_{\vc x}\ma G^H\bigr)\ma U^H
= \ma U\bigl(|\ma G|^2\,\hat{\ma H}_{\vc x}\bigr)\ma U^H,
\end{align*}
which shows that \(\vc y\) is DGWSS and yields the PSD.
\end{proof}

Similar to Definition~\ref{stationary-level-undirected},  we give the definition of stationarity level for DGWSS. 

\begin{definition}[DGWSS Stationary Level] \label{stationary-level-directed}
For a random variable $\vc x$, its covariance matrix $\ma H_\vc x$, its associated graph spectral covariance matrix $\hat{\ma H}_\vc x$ and the associated PSD $\hat{\vc h}_\vc x$, the stationary level is defined as
    \begin{equation}
        \kappa(\ma H_\vc x) = \frac{||\hat{\vc h}_\vc x||_2}{||\hat{\ma H}_\vc x||_F}.
    \end{equation}
\end{definition}
An estimate $\kappa(\ma H_\vc m)$ for an empirical signal can be obtained by plugging in the empirical spectral covariance matrix $\hat{\ma H}_{\vc m}$ and PSD $\hat{\vc h}_{\vc m}$.

Unlike the undirected setting, the directed GFT basis is generally non-orthogonal, implying that directed-graph WN is no longer spatially uncorrelated. Consequently, node signals exhibit graph-dependent variances and cross-covariances determined by the graph topology and edge orientation.

Fig.\ref{fig:cov} illustrates the covariance structure of WN for two directed graphs. In the synthetic example (Fig.\ref{wncovar_graph1}), nodes sharing common predecessor nodes exhibit increased covariance, producing visible block structures in the covariance matrix. Moreover, nodes with larger in-degree present larger variances. Similar behavior can also be observed in the real graph example (Fig.~\ref{wncovar_graph2}), where covariance and variance patterns reflect the directional organization of the graph.

These observations highlight a fundamental distinction between undirected and directed graph WN. While WN on undirected graphs is graph-independent, directed-graph WN intrinsically encodes topology-dependent spatial correlations through the non-orthogonality of the directed GFT basis. This distinction will later play an important role in interpreting stationary surrogate signals and the associated null hypothesis, since preserving graph spectral second-order statistics on directed graphs also implicitly preserves graph-dependent covariance structures.

\section{White Noise on Graphs}
\label{sect:white-noise}
An important family of stationary random graph signals is white noise (WN). While WN on undirected graphs has similar properties to conventional WN on the Euclidean domain, its extension to directed graphs comes with some surprising properties. We will consider some practical examples to illustrate these points. 

\subsection{White Noise on Undirected Graph}
We consider a WN random graph signal $\vc w\in\mathbb{R}^N$ defined through its graph spectral covariance matrix being the identity matrix \cite[Chap. 8.10]{shynk_probability_2013}; i.e., $\hat{\bar{\ma H}}_{\vc w} = \ma I_N$. 
For an undirected graph, the GFT is a real-valued orthonormal basis, and thus the covariance matrix of $\vc w$ is also the identity matrix 
\begin{equation}
    \ma H_{\vc w} = \bar{\ma U}\hat{\bar{\ma H}}_{\vc w}\bar{\ma U}^H=\ma I_N.
\end{equation} Therefore, the variance is uniform for all nodes and the cross variances are null. In the case of a Gaussian distribution, the nodal signals are independent and identically distributed (i.i.d.). 

\subsection{White Noise on Directed Graph}
\label{wn-noise-directed}
The real-valued WN random graph signal $\vc w\in\mathbb{R}^N$ is also defined through the diagonal graph spectral covariance matrix $\hat{\ma H}_{\vc w}=\ma I_N$, and thus the following covariance matrix is obtained: 
\begin{equation}
    \ma H_{\vc w} = \ma U\hat{\ma H}_{\vc w}\ma U^H= \ma U\ma U^H.
\end{equation}
It should be noted that, despite $\ma U$ being a matrix of complex-valued eigenvectors, $\ma H_{\vc w}$ is a proper covariance matrix for $\vc w$, that it is real-valued and positive semi-definite as shown in the following proposition. 
\begin{prop}
    The matrix $\ma U\ma U^H$ is real-valued and positive semi-definite.
\end{prop}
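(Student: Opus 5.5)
Positive semi-definiteness is immediate and I will dispose of it first. For any $\vc z\in\mathbb{C}^N$ we have
\begin{equation*}
\vc z^H\ma U\ma U^H\vc z=\|\ma U^H\vc z\|_2^2\ge 0 .
\end{equation*}
Moreover, $(\ma U\ma U^H)^H=\ma U\ma U^H$, so the matrix is Hermitian. It is in fact positive definite, because $\ma U$ is invertible. Once realness is shown, a real Hermitian matrix is a real symmetric matrix, and the same inequality restricted to $\vc z\in\mathbb{R}^N$ gives positive semi-definiteness in the real sense as well.

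Realness is the substantive part. I will use the structure recalled in Sect.~\ref{sect:gft}: since $\ma A$ is real, its eigenvalues are either real or come in complex conjugate pairs. I will choose eigenvectors accordingly. For a real eigenvalue, the eigenvector $\vc u_k$ is taken real. For a conjugate pair $(\lambda_{k_1},\lambda_{k_2}=\lambda_{k_1}^\star)$, we set $\vc u_{k_2}=\vc u_{k_1}^\star$. This is the convention already assumed in the paper. I will then write
\begin{equation*}
\ma U\ma U^H=\sum_{k=1}^N \vc u_k\vc u_k^H
\end{equation*}
and group the terms. Each real eigenvector contributes $\vc u_k\vc u_k^T$, which is real. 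Each conjugate pair contributes
\begin{equation*}
\vc u_{k_1}\vc u_{k_1}^H+\vc u_{k_1}^\star\vc u_{k_1}^T=2\,\mathrm{Re}\bigl(\vc u_{k_1}\vc u_{k_1}^H\bigr),
\end{equation*}
since the second term is the complex conjugate of the first. The whole sum is therefore real.

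An equivalent and more compact route is to note that conjugation permutes the columns of $\ma U$. That is, $\ma U^\star=\ma U\ma P$ for a permutation matrix $\ma P$ that swaps paired indices. Then
\begin{equation*}
(\ma U\ma U^H)^\star=\ma U\ma P\ma P^T\ma U^H=\ma U\ma U^H .
\end{equation*}

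The main obstacle is that realness depends on the normalization of the eigenvectors. If a complex eigenvector were rescaled without its partner being rescaled conjugately, the pairing $\vc u_{k_2}=\vc u_{k_1}^\star$ would break and the product could become complex. Real eigenvectors need the same care: scaling one by a complex phase is harmless because the phase cancels in $\vc u_k\vc u_k^H$, but a complex (non-unimodular) scalar changes the norm. I will therefore state explicitly that the eigenbasis is chosen with conjugate-symmetric columns, consistent with the GFT convention $\hat{\vc x}[k_1]=\hat{\vc x}[k_2]^\star$ given earlier. I will also note that repeated eigenvalues are unproblematic: a basis of each real eigenspace can be chosen real, and for each complex eigenspace we take the conjugate of its basis for the conjugate eigenspace.
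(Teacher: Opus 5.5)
Your proof is correct and follows the paper's own argument: you split $\ma U\ma U^H=\sum_k \vc u_k\vc u_k^H$ into real eigenvectors and conjugate pairs, each pair contributing $2\,\mathrm{Re}(\vc u_{k_1}\vc u_{k_1}^H)$, and you get positive semi-definiteness from $\vc z^H\ma U\ma U^H\vc z=\|\ma U^H\vc z\|_2^2\ge 0$. Two of your additions are worthwhile: the explicit conjugate-symmetric normalization (with real bases for repeated real eigenvalues), which the paper's proof assumes tacitly, and the observation that invertibility of $\ma U$ even gives positive definiteness. One small correction: rescaling a single real eigenvector by any complex scalar $c$ only multiplies $\vc u_k\vc u_k^T$ by $|c|^2$, so real eigenvectors need no special care for realness.
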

\begin{proof}
    (Proof of realness) Rewrite $\ma U\ma U^H=\sum_k \vc u_k\vc u_k^H$ with $\vc u_k$ the eigenvectors. Now by grouping the indices of real eigenvalues into the set $\mathcal{R}$ and the indices of complex conjugate eigenvalues into the set $\mathcal{C}$, we have 
    \begin{align*}
        \ma U\ma U^H = \sum_{k=1}^N \vc u_k\vc u_k^H &= \sum_{k\in \mathcal{R}} \vc u_k \vc u_k^H + \frac{1}{2}\sum_{k\in \mathcal{C}} (\vc u_k \vc u_k^H + \vc u_k \conjugate \vc {u_k \conjugate}^H )\\
        &= \sum_{k\in \mathcal{R}} \vc u_k \vc u_k^H + \sum_{k\in \mathcal{C}} \Re(\vc u_k \vc u_k^H) \in \mathbb{R}^{N\times N}.
    \end{align*}
Thus proving realness of $\ma U\ma U^H$.\\
    (Proof of positive semi-definiteness) For any $\vc x\in\mathbb{R}^N$ we have \begin{align*}
        \vc x^H\ma U\ma U^H \vc x = \|\ma U^H\vc x \|_2^2 \geq 0
    \end{align*}
thus proving positive semi-definiteness of $\ma U\ma U^H$.
\end{proof}

Since the GFT for directed graphs is not an orthonormal basis, the WN covariance is generally not diagonal implying that the nodal signals carry different variances and are spatially correlated. In Fig.~\ref{fig:cov}, the WN variances and covariances are illustrated for two directed graphs. As opposed to WN for undirected graphs, WN on directed graphs becomes graph-specific.

\begin{figure*}[hbt!]
    \centering
  \subfloat[Set of irregular nodes\label{dirac-signal}]{%
       \includegraphics[width=0.34\linewidth]{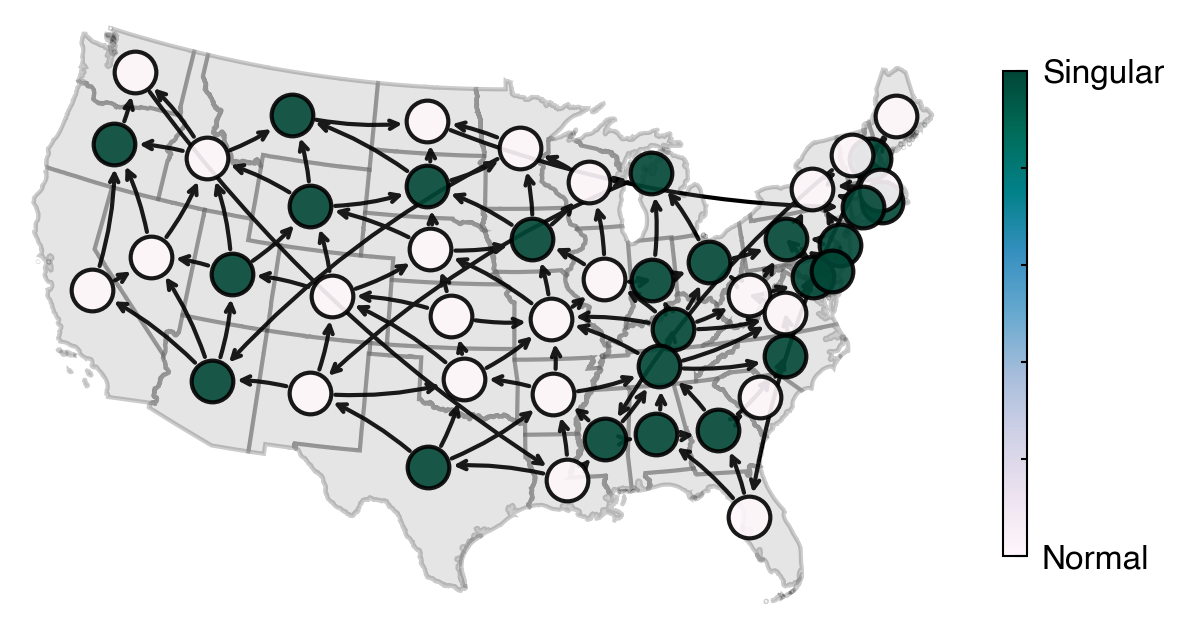}}
    \hfill
  \subfloat[Statistical significance \label{stats-dirac}]{%
        \includegraphics[width=0.32\linewidth]{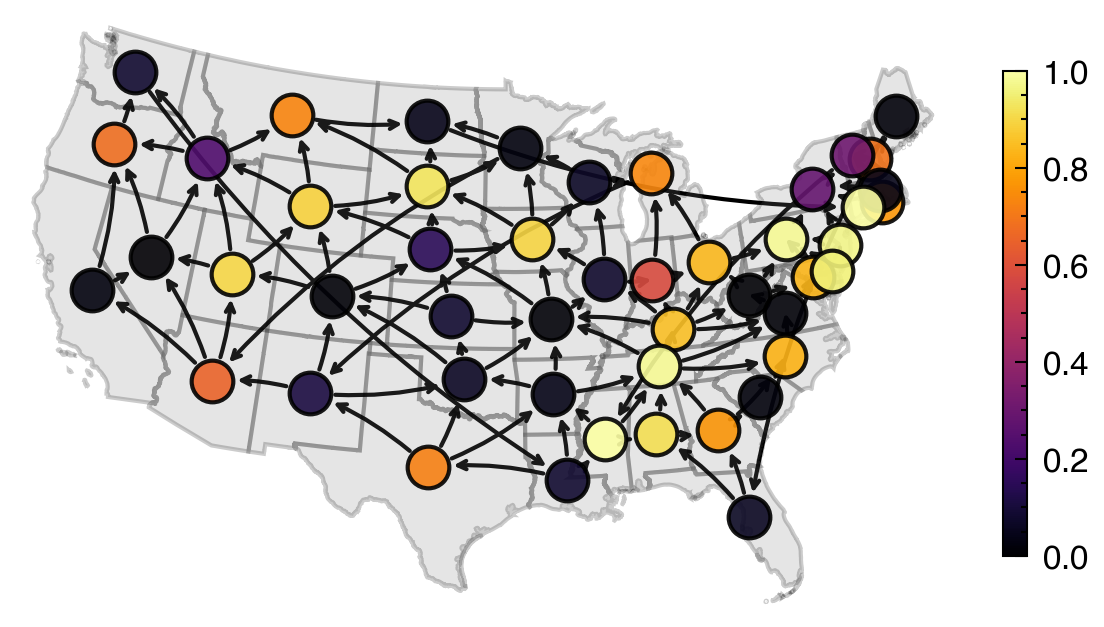}}
    \hfill
    \subfloat[Accuracies \label{stats-multi}]{%
        \includegraphics[width=0.32\linewidth]{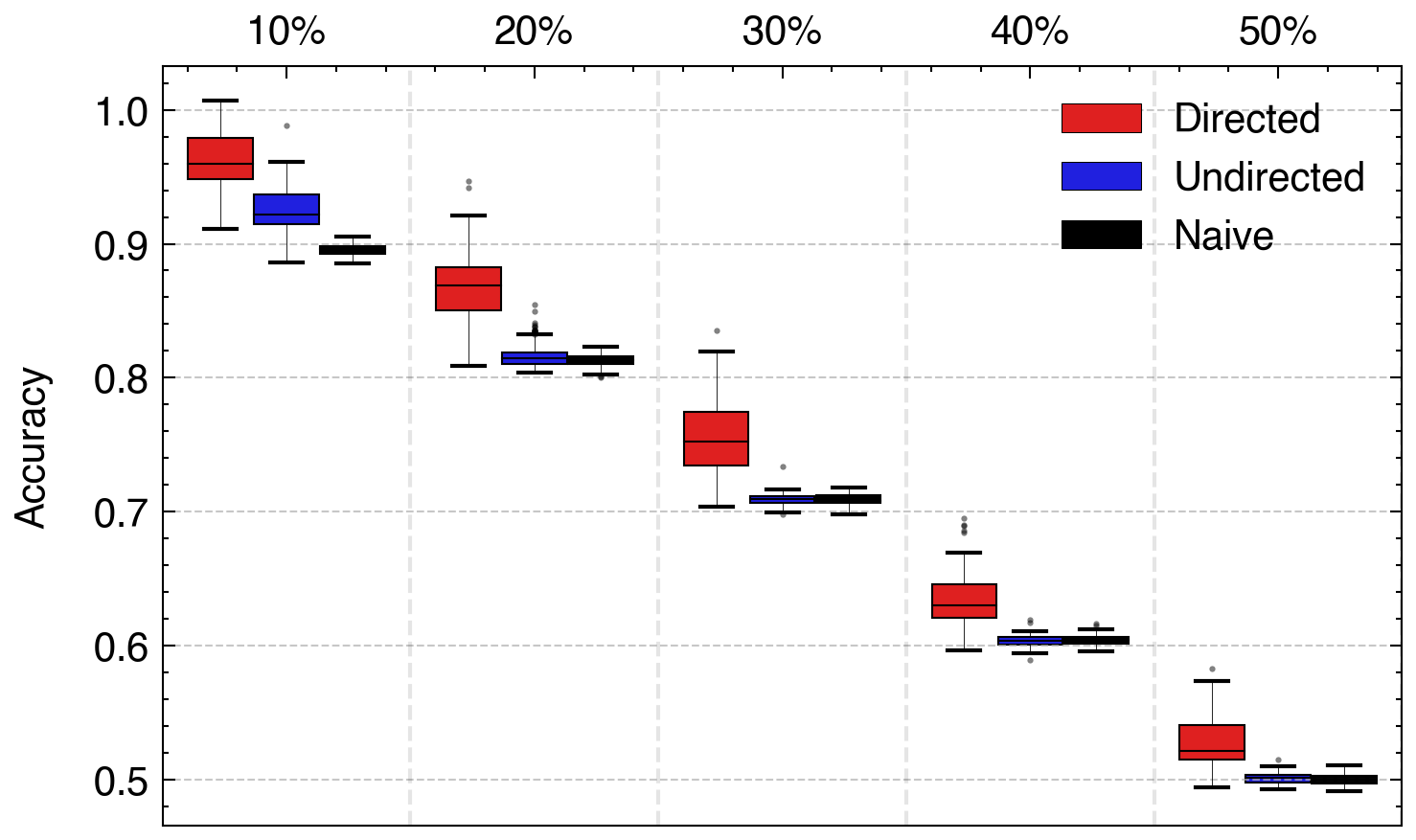}}
  \caption{(a)~Set of nodes on the USA graph with irregular signal contributions. (b)~Nodal statistical significance $(1-p\text{-value})$ obtained from directed-graph surrogates. (c)~Accuracies for different surrogate types (directed graph, undirected graph, naive permutation) and $50$ random sets of irregular nodes with different densities.}
\end{figure*}

\subsection{Illustrating Example}
We observe that the covariance of a directed graph WN is not necessarily diagonal \replaced{(Figs.~\ref{wncovar_graph1} and~\ref{wncovar_graph2}), reflecting the structure and orientation of the graph.}{. Here, we compare  the patterns of covariance with the graph structure.}\replaced{ For the synthetic graph in Fig.~\ref{auto_graph1}, the corresponding covariance matrix in Fig.~\ref{wncovar_graph1} exhibits block-like covariance structure between nodes sharing common parent nodes (indicated in green in Fig.~\ref{wncovar_graph1}). Intuitively, child nodes share a common confounding variable (parent node), resulting in increased cross-covariance. Moreover, nodes with larger strict in-degree, such as those on the right side of the bi-cycle in Fig.~\ref{auto_graph1}, show increased variances in Fig.~\ref{wncovar_graph1}. This suggests that variability accumulates through multiple incoming directional connections, whereas symmetric edges do not contribute additional variance because the variability would be shared. }{We first take as an example the graph in Fig.~\ref{auto_graph1}. Considering the WN specific to this graph, we look at its covariance matrix (Fig.~\ref{wncovar_graph1}) and see that high covariance between two nodes occurs when they share the same parent node, this explains the block like covariance (within the highlighted green Fig.~\ref{wncovar_graph1}). High covariance between children nodes can intuitively be interpreted as the children nodes sharing a common confound variable (parent node).

Once again, taking as an example the first graph (Fig.~\ref{auto_graph1}), higher auto-covariance is obtained when nodes have high numbers of strict in-degree connections. In Fig.~\ref{wncovar_graph1}, this is what we see for nodes on the right side of the bi-cycle, as they have $2$ strict in-degree connections in comparison to $1$ for all other nodes. Intuitively, higher number of in-degree connections hints to higher variance since in addition to the node's inherent variance, additional variability is created by parent nodes. Symmetric edges does not contribute to the variance as variability would be shared.} \replaced{Similar behavior can also be observed for the real graph in Fig.\ref{auto_graph2}. Although covariance clusters are less visually pronounced in Fig.\ref{wncovar_graph2}, nodes located lower in the graph tend to exhibit larger variances. This is consistent with the graph’s overall downward directional organization, where lower nodes receive simultaneous influences from multiple upstream nodes, leading to increased variability.}{
While in real graph (Fig.~\ref{auto_graph2}), spatial clusters of cross covariance between nodes are less apparent, it remains that nodes' auto correlation can be observed. Lower located nodes are often of higher in-degree as the graph has inherently a downward direction, moving toward fewer nodes. Intuitively, lower located nodes are perturbed simultaneously by many source nodes, explaining higher nodes' variance for lower located nodes (Fig.~\ref{wncovar_graph2}).}

\section{Surrogate Generation Scheme}
\label{sect:surrogate}

Equipped with the notion of stationarity we present the scheme for surrogate generation, starting with the null hypothesis under which our surrogate graph signals are generated. Then we briefly remind surrogate generation on undirected graphs and generalize surrogate generation to directed graphs. We also prove that these surrogate processes stationarizes non-stationary signals such that the expectancy of the surrogates is a stationary version of the initial signal.

\begin{figure*}[hbt!]
    \centering
  \subfloat[Irregular covariance structure \label{gt-connectivity}]{%
       \includegraphics[width=0.21\linewidth]{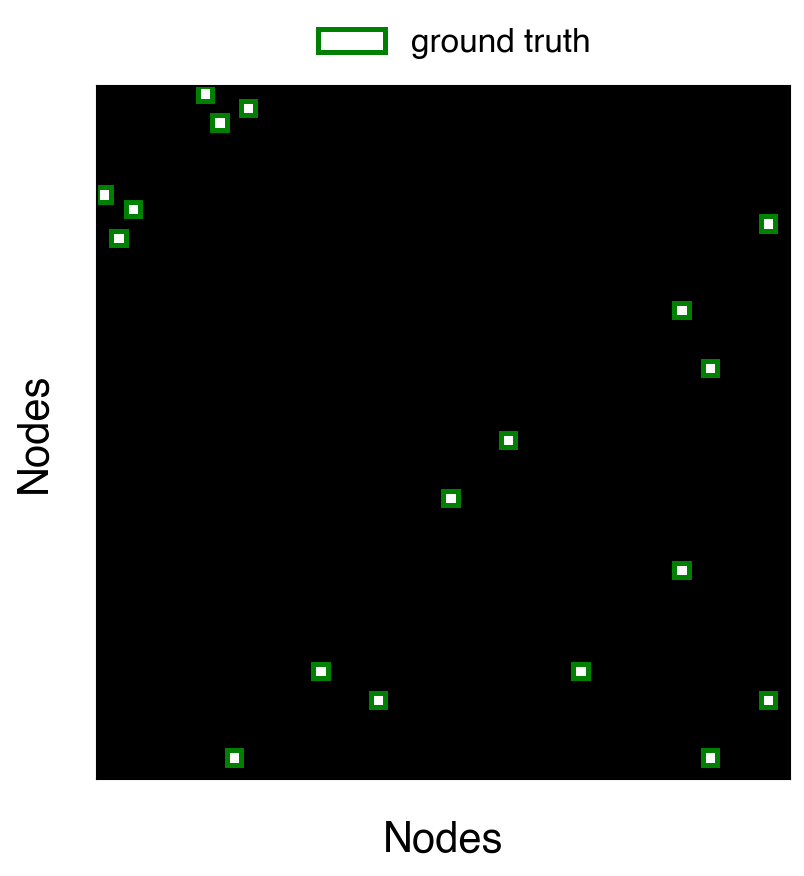}}
       \hfill
  \subfloat[Detections directed\label{test-connectivity-directed}]{%
        \includegraphics[width=0.21\linewidth]{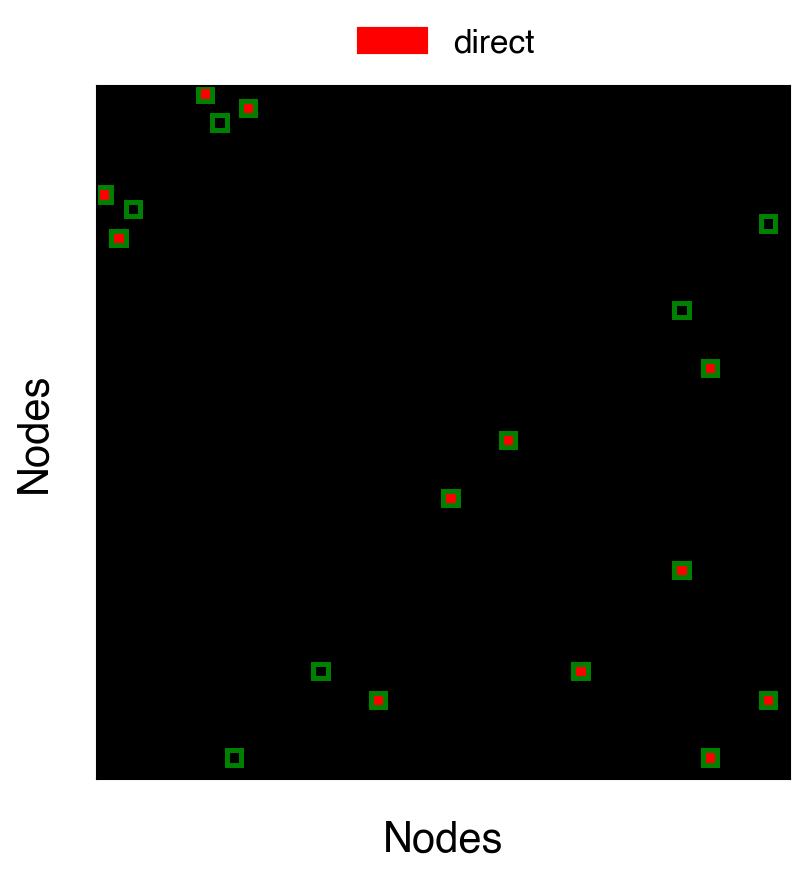}}
        \hfill
\subfloat[Detections undirected\label{test-connectivity-undirected}]{%
        \includegraphics[width=0.21\linewidth]{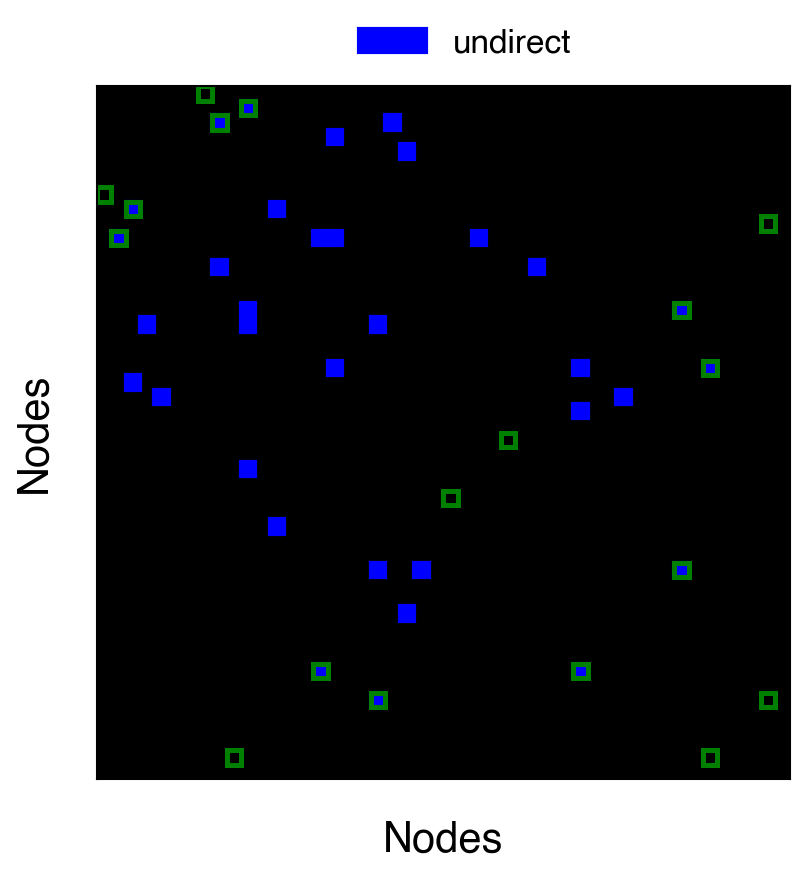}}
\subfloat[Accuracies\label{connectivity-distribution}]{%
  \raisebox{60pt}{
    \parbox{0.32\linewidth}{%
      \includegraphics[width=\linewidth]{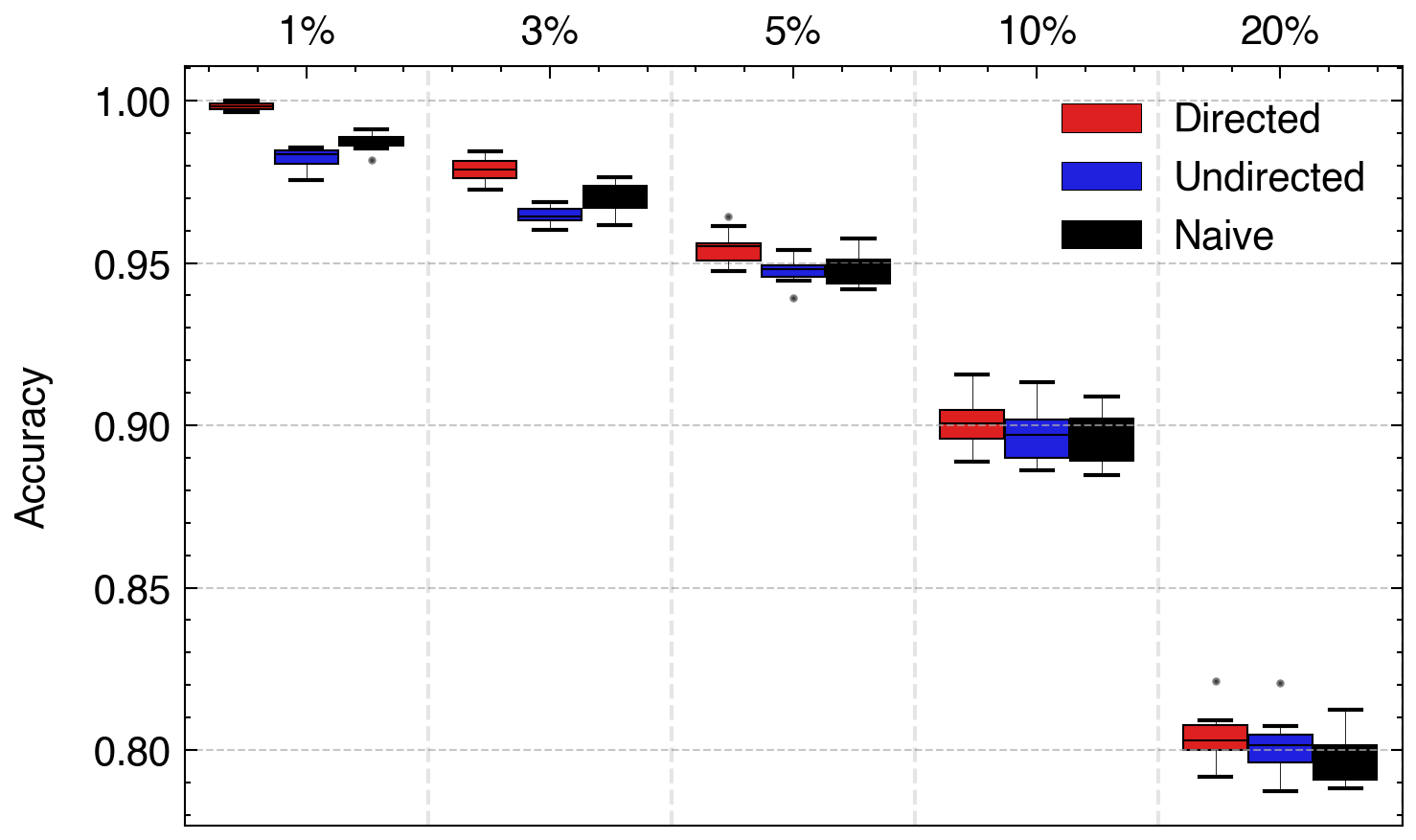}%
    }%
  }%
}
  \caption{(a)~Set of irregular edges. (b,c)~Detected pairs using directed-graph surrogates (red), undirected-graph surrogates (blue). Ground truth pairs in green frame. (d)~Accuracies for different surrogate types on $50$ random irregular covariances with different densities.}
  \label{fig5} 
\end{figure*}

\subsection{Null Hypothesis}
We consider as null hypothesis $\mathcal{H}_0$ that the observed graph signals $\vc m_k, k=1,...,K$, are realizations of a (D)GWSS $\vc x$ on a known graph. The surrogate generation scheme comes up with graph signals $\vc m_k^{(l)}$ under the assumption of  stationarity.

\subsection{Undirected Surrogate Graph Signals}
Following our previous findings on \cite{pirondini_spectral_2016}, we define the randomization filter $\bar{\ma R}$ as a diagonal sign randomization matrix
\begin{equation}
    \bar{\ma R}[n,n] = \epsilon_n,
\end{equation}
each $\epsilon_n$ being an i.i.d.\ Rademacher random variable \cite{montgomery1990distribution} with $\epsilon_n\in\{-1,1\}$, and $\mathbb{P}(\epsilon_n=1)=\mathbb{P}(\epsilon_n=-1)=0.5$. 
From a single measurement $\vc m$, we consider $L$ instances of the sign randomization matrix $\bar{\ma R}^{(l)}, l=1,\dots,L$, then $L$ surrogates using the undirected graph structure are generated as follows:
\begin{equation}
    \vc m^{(l)} = \bar{\ma U}\,\bar{\ma R}^{(l)}\bar{\ma U}^T \vc m.
\end{equation}
An important result of sign-randomization is that the undirected-graph surrogates are realizations of a GWSS process.

\begin{prop}\label{proof-undirected-statio}
    For an observed graph signal $\vc m\in\mathbb{R}^N$, the undirected graph surrogates $\vc m^{(l)}$ are realizations of a GWSS.
\end{prop}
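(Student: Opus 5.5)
The plan is to treat the observed signal $\vc m$ as fixed (deterministic) and regard the surrogate $\vc m^{(l)}=\bar{\ma U}\bar{\ma R}^{(l)}\bar{\ma U}^T\vc m$ as a random graph signal whose only source of randomness is the Rademacher diagonal $\bar{\ma R}^{(l)}$. It then suffices to compute its first two moments and check them against Definition~\ref{ustatio}: we need zero mean, and a covariance of the form $\bar{\ma U}\hat{\bar{\ma H}}\bar{\ma U}^T$ with $\hat{\bar{\ma H}}$ real and diagonal.

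First, write $\hat{\vc m}=\bar{\ma U}^T\vc m$, so that the surrogate has GFT coefficients $\hat{\vc m}^{(l)}[n]=\epsilon_n\hat{\vc m}[n]$.

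\textbf{Mean.} Since $\mathbb{E}[\epsilon_n]=0$, we get $\mathbb{E}[\bar{\ma R}^{(l)}]=\vc 0$. Hence $\mathbb{E}[\vc m^{(l)}]=\vc 0$, which matches the zero-mean convention adopted in the paper.

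\textbf{Covariance.} Expanding,
\begin{equation*}
\mathbb{E}[\vc m^{(l)}\vc m^{(l)H}]=\bar{\ma U}\,\mathbb{E}\bigl[\bar{\ma R}^{(l)}\hat{\vc m}\hat{\vc m}^H\bar{\ma R}^{(l)}\bigr]\bar{\ma U}^T .
\end{equation*}
The $(n,n')$ entry of the inner expectation is $\mathbb{E}[\epsilon_n\epsilon_{n'}]\,\hat{\vc m}[n]\hat{\vc m}[n']^\star$. By independence and unit variance of the Rademacher variables, $\mathbb{E}[\epsilon_n\epsilon_{n'}]=\delta_{n,n'}$. The inner matrix is therefore $\mathrm{diag}(|\hat{\vc m}[n]|^2)$, which is real and diagonal, giving GWSS with PSD $\hat{\bar{\vc h}}[n]=|\hat{\vc m}[n]|^2$.

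The argument is essentially routine. The only step requiring care is the interpretive one: stating clearly that stationarity is with respect to the randomization distribution conditional on $\vc m$, and that $\epsilon_n^2=1$ deterministically, so the diagonal terms exactly reproduce the energy of $\vc m$ in each graph frequency. That last fact also shows the surrogates preserve the empirical spectrum, which is the point of the construction.
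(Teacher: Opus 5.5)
Your proof is correct and is essentially the paper's argument. Both write the surrogate covariance as $\bar{\ma U}\,\mathbb{E}\bigl[\bar{\ma R}\,\bar{\ma U}^T\vc m\vc m^H\bar{\ma U}\,\bar{\ma R}\bigr]\bar{\ma U}^T$ and use $\mathbb{E}[\epsilon_n\epsilon_{n'}]=\delta_{n,n'}$ to conclude that the inner spectral covariance is diagonal; your explicit check that $\mathbb{E}[\vc m^{(l)}]=\vc 0$ (so $\mathbb{E}[\vc m^{(l)}\vc m^{(l)H}]$ really is the covariance) and your identification of the PSD as $|\hat{\vc m}[n]|^2$ are small additions the paper leaves implicit.
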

\begin{proof}
Let the surrogate random variable be \(\vc z=\bar{\ma U}\bar{\ma R}\bar{\ma U}^T\vc m\). Its covariance equals
\begin{align*}
\ma H_{\vc z}=\mathbb{E}\big[\vc z\vc z^H\big]
= \bar{\ma U}\,\mathbb{E}\big[\bar{\ma R}\underbrace{\bar{\ma U}^T\vc m\vc m^H\bar{\ma U}}_{=\;\hat{\bar{\ma H}}_\vc m}\bar{\ma R}^T\big]\,\bar{\ma U}^T
= \bar{\ma U}\hat{\bar{\ma H}}_\vc z\bar{\ma U}^T,
\end{align*}
where \(\hat{\bar{\ma H}}_\vc z \coloneqq \mathbb{E}\big[\bar{\ma R}\hat{\bar{\ma H}}_\vc m\bar{\ma R}^T\big]\). Since \(\bar{\ma R}\) is diagonal with entries \(\bar{\ma R}[k,k]=\epsilon_k\) (i.i.d.\ Rademacher), the \((n,m)\)-entry of \(\hat{\bar{\ma H}}_\vc z\) is
\begin{align*}
\hat{\bar{\ma H}}_\vc z[n,m]
&= \mathbb{E}\big[\bar{\ma R}[n,n]\,\hat{\bar{\ma H}}_\vc m[n,m]\,\bar{\ma R}[m,m]\big] \\
&= \mathbb{E}[\epsilon_n\epsilon_m]\,\hat{\bar{\ma H}}_\vc m[n,m].
\end{align*}
Because \(\epsilon_n\) are independent with \(\mathbb{E}[\epsilon_n]=0\) and \(\mathbb{E}[\epsilon_n^2]=1\), we have \(\mathbb{E}[\epsilon_n\epsilon_m]=\delta_{n,m}\). Hence
\begin{align*}
\hat{\bar{\ma H}}_\vc z[n,m]=\delta_{n,m}\,\hat{\bar{\ma H}}_\vc m[n,m],
\end{align*}
so \(\hat{\bar{\ma H}}_\vc z\) is diagonal. Therefore \(\ma H_{\vc z}=\bar{\ma U}\hat{\bar{\ma H}}_\vc z\bar{\ma U}^T\) has the form required by Definition~\ref{ustatio}, and \(\vc m^{(l)}\) is a realization of a GWSS process.
\end{proof}

\subsection{Directed Surrogate Graph Signals}
Generalizing the sign randomization matrix, we present a phase randomization matrix $\ma R$ for directed graph surrogates, defined as follows:
\begin{equation}
    \ma R[n,n] =  
    \left\{ \begin{array}{ll} 
    e^{j\Psi_n}, & \Im(\lambda_n) \neq 0,\\
    \epsilon_n, & \Im(\lambda_n)=0,
    \end{array} \right.
\end{equation}
with $\epsilon_n$ being i.i.d Rademacher random variables and 
$\Psi_k \overset{\text{i.i.d}}{\sim} \mathcal{U}([-\pi, \pi])$. If a real signal is desired then for indices of conjugate pairs $k, l$, i.e where $\lambda_k = \lambda_l^*$, we set $\Psi_k=-\Psi_l$. 
In directed graphs, the eigenvectors are not necessarily orthogonal, and it is well known that due to non-orthogonality, Parseval's theorem does not hold in general \cite{seifert_digraph_2021}. As such energy is in general not preserved when applying a phase randomization. Yet, under the null hypothesis of stationarity, the energy is preserved. To show this, we first show Proposition~\ref{dphase-preserveauto}. 

\begin{prop} \label{dphase-preserveauto}
If $\vc x$ is DGWSS, then $\vc z=\ma U\ma R\ma U^{-1}\vc x$ is DGWSS and we have $\ma H_\vc z = \ma H_\vc x$.
\end{prop}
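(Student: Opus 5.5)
The plan is to compute the covariance of $\vc z$ directly, in the same way as in the proof of Proposition~\ref{filter-PSD}, treating $\ma R$ as a random spectral filter drawn independently of $\vc x$. First, since $\vc x$ is DGWSS, Definition~\ref{dstatio} gives a diagonal $\hat{\ma H}_{\vc x}$ with $\ma H_{\vc x}=\ma U\hat{\ma H}_{\vc x}\ma U^H$. Equivalently, $\ma U^{-1}\ma H_{\vc x}\ma U^{-H}=\hat{\ma H}_{\vc x}$, which is the spectral covariance $\mathbb{E}[\hat{\vc x}\hat{\vc x}^H]$ with $\hat{\vc x}=\ma U^{-1}\vc x$. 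Then I would write
\begin{align*}
\ma H_{\vc z}=\mathbb{E}[\vc z\vc z^H]=\ma U\,\mathbb{E}\big[\ma R\,\ma U^{-1}\vc x\vc x^H\ma U^{-H}\,\ma R^H\big]\,\ma U^H .
\end{align*}
Using the independence of $\ma R$ and $\vc x$, I would first condition on $\ma R$ (tower property). This reduces the inner expectation to $\mathbb{E}_{\ma R}\big[\ma R\hat{\ma H}_{\vc x}\ma R^H\big]$.

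The key step is to look at this entrywise. We have $(\ma R\hat{\ma H}_{\vc x}\ma R^H)[n,m]=\ma R[n,n]\,\hat{\ma H}_{\vc x}[n,m]\,\ma R[m,m]^\star$. Because $\hat{\ma H}_{\vc x}$ is diagonal, only $n=m$ survives, and there the entry is $|\ma R[n,n]|^2\hat{\vc h}_{\vc x}[n]$. Every diagonal entry of $\ma R$ has unit modulus: it is either $e^{j\Psi_n}$ or $\epsilon_n\in\{-1,1\}$. Hence $\ma R\hat{\ma H}_{\vc x}\ma R^H=\hat{\ma H}_{\vc x}$ holds deterministically, for every realization of $\ma R$, and the expectation over $\ma R$ is trivial. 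This gives $\ma H_{\vc z}=\ma U\hat{\ma H}_{\vc x}\ma U^H=\ma H_{\vc x}$. The spectral covariance of $\vc z$ is the diagonal matrix $\hat{\ma H}_{\vc x}$, so $\vc z$ is DGWSS by Definition~\ref{dstatio} with the same PSD. This is also the special case $|\ma G[n,n]|=1$ of Proposition~\ref{filter-PSD}, applied conditionally on $\ma R$.

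The step needing the most care is justifying that the dependence structure of $\ma R$ does not matter. In the real-signal variant, the phases of conjugate pairs are tied by $\Psi_k=-\Psi_l$, so the entries of $\ma R$ are not independent, unlike in Proposition~\ref{proof-undirected-statio}. I would stress that the argument never uses $\mathbb{E}[\ma R[n,n]\ma R[m,m]^\star]=\delta_{n,m}$; it uses only that $\hat{\ma H}_{\vc x}$ is already diagonal and that $|\ma R[n,n]|=1$ almost surely. The off-diagonal terms are therefore killed by the DGWSS hypothesis and not by averaging over $\ma R$. I would also note that $\vc z$ is zero-mean, since $\mathbb{E}[\vc z]=\ma U\,\mathbb{E}[\ma R]\,\ma U^{-1}\mathbb{E}[\vc x]=\vc 0$, so $\mathbb{E}[\vc z\vc z^H]$ is indeed the covariance.
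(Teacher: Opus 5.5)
Your proof is correct and is essentially the paper's argument: the paper simply invokes Proposition~\ref{filter-PSD} with $|\ma R[n,n]|^2=1$, whereas you carry out that computation inline. Your explicit conditioning on $\ma R$, using its independence from $\vc x$, shows that $\ma R\hat{\ma H}_{\vc x}\ma R^H=\hat{\ma H}_{\vc x}$ holds for every realization, and this is precisely the step the paper leaves implicit when it applies a deterministic-filter result to the random $\ma R$.
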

\begin{proof}
This is an immediate consequence of Proposition~\ref{filter-PSD}, since $|\ma R[n,n]|^2=1$ for $n=1,...,N$. 
\end{proof}
Following Proposition~\ref{dphase-preserveauto}, notice that from $\ma H_\vc z = \ma H_\vc x$, we have in particular $\Tr(\ma H_\vc z) = \Tr(\ma H_\vc x)$ which equivalently yields $||\vc x||_2=||\vc z||_2$. Thus a phase-randomized DGWSS is also DGWSS and preserves energy.

Under null hypothesis, a measured signal $\vc m$ and $K$ instances of the phase randomization matrix $\ma R^{(l)}, l=1,\dots,L$, then $L$ directed graph surrogates are generated as follows:
\begin{equation}
    \vc m^{(l)} = \ma U \ma R^{(l)} \ma U^{-1}\vc m.
\end{equation}
In practice, the energy of $\vc m^{(l)}$ may differ from that of $\vc m$. Therefore, we normalize the surrogate by a factor $||\vc m||_2/||\vc m^{(l)}||_2$. 

Finally, we show once more that the directed-graph surrogates are realizations of a DGWSS process.
\begin{prop}
    For an observed graph signal $\vc m\in\mathbb{R}^N$, the directed graph surrogate $\vc m^{(l)}$ is a realization of a DGWSS.
\end{prop}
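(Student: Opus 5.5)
The plan is to follow the proof of Proposition~\ref{proof-undirected-statio} step by step, with the orthonormal basis $\bar{\ma U}$ replaced by the non-orthogonal eigenbasis $\ma U$ and the Rademacher signs replaced by the phase randomization matrix $\ma R$. I will define the surrogate random variable $\vc z=\ma U\ma R\ma U^{-1}\vc m$, where $\vc m$ is fixed and the only randomness is in $\ma R$, and compute
\begin{align*}
\ma H_{\vc z}=\mathbb{E}\big[\vc z\vc z^H\big]
= \ma U\,\mathbb{E}\big[\ma R\,\hat{\ma H}_{\vc m}\,\ma R^H\big]\,\ma U^H,
\qquad \hat{\ma H}_{\vc m}\coloneqq \ma U^{-1}\vc m\vc m^H\ma U^{-H}.
\end{align*}
By Definition~\ref{dstatio}, it then suffices to show that $\hat{\ma H}_{\vc z}\coloneqq\mathbb{E}[\ma R\hat{\ma H}_{\vc m}\ma R^H]$ is diagonal. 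Since $\ma R$ is diagonal, its $(n,m)$ entry is $\mathbb{E}\big[\ma R[n,n]\,\ma R[m,m]^\star\big]\,\hat{\ma H}_{\vc m}[n,m]$. The proof therefore reduces to showing $\mathbb{E}[\ma R[n,n]\ma R[m,m]^\star]=\delta_{n,m}$.

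The diagonal case is immediate, because $|\ma R[n,n]|^2=1$ whether the entry is a sign or a unit phase. For $n\neq m$ I will split into cases according to the types of $\lambda_n$ and $\lambda_m$.
\begin{itemize}
\item Both real: the factor is $\mathbb{E}[\epsilon_n]\mathbb{E}[\epsilon_m]=0$, exactly as in the undirected case.
\item One real and one complex: independence gives $\mathbb{E}[\epsilon_n]\,\mathbb{E}[e^{-j\Psi_m}]=0$.
\item Two complex eigenvalues from different conjugate pairs: the phases are independent, so the factor is $\mathbb{E}[e^{j\Psi_n}]\,\mathbb{E}[e^{-j\Psi_m}]=0$, since $\mathbb{E}[e^{j\Psi}]=0$ for $\Psi\sim\mathcal{U}([-\pi,\pi])$.
\item The coupled case $\lambda_n=\lambda_m^\star$ with $\Psi_n=-\Psi_m$, imposed for real surrogates: here the factor is $\mathbb{E}[e^{j\Psi_n}e^{j\Psi_n}]=\mathbb{E}[e^{2j\Psi_n}]=0$, again because the phase is uniform.
\end{itemize}
The coupling does make the pseudo-covariance $\mathbb{E}[\ma R[n,n]\ma R[m,m]]$ nonzero. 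That quantity does not enter $\ma H_{\vc z}$, and I will state this explicitly. Together the cases give $\hat{\ma H}_{\vc z}[n,m]=\delta_{n,m}\hat{\ma H}_{\vc m}[n,n]$, which is the required diagonal form with PSD $\hat{\vc h}_{\vc z}[n]=\hat{\ma H}_{\vc m}[n,n]$.

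I expect the main obstacle to be the energy normalization by $\|\vc m\|_2/\|\vc m^{(l)}\|_2$. This factor is random and depends on $\ma R$, so the factorization of the expectation above no longer applies directly. I will first prove the statement for the unnormalized surrogate $\ma U\ma R^{(l)}\ma U^{-1}\vc m$, which is the object defined in the scheme. For the normalized version, I would try to show that the normalization factor depends on $\ma R$ only through quantities invariant under a global phase rotation of each independent block. If that holds, the off-diagonal spectral terms still average to zero, at least under the stationary null, where the energy is preserved in expectation by Proposition~\ref{dphase-preserveauto}. If this argument does not close, I will state the result for the unnormalized surrogate and present the normalization as a practical rescaling.
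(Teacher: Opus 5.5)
Your argument is correct and is essentially the paper's proof. The paper uses the same factorization $\ma H_{\vc z}=\ma U\,\mathbb{E}[\ma R\hat{\ma H}_{\vc m}\ma R^H]\,\ma U^H$ and the same case analysis showing $\mathbb{E}[\ma R[n,n]\ma R[m,m]^\star]=\delta_{n,m}$. Like your fallback, the paper proves the statement only for the unnormalized surrogate $\ma U\ma R\ma U^{-1}\vc m$. That restriction is necessary, because the rescaled surrogate is generally not DGWSS: for two real eigenvalues with $\vc u_1^T\vc u_2\neq 0$ and $\hat{\vc m}[1]\hat{\vc m}[2]\neq 0$, the norm $\|\vc m^{(l)}\|_2^2$ depends on $\epsilon_1\epsilon_2$, so $\mathbb{E}[\epsilon_1\epsilon_2/\|\vc m^{(l)}\|_2^2]\neq 0$.
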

\begin{proof}
Let $\vc z=\ma U\ma R\ma U^{-1}\vc m$ the surrogate random variable associated to $\vc m^{(l)}$. The covariance matrix of $\vc z$ is
\begin{align*}
\ma H_{\vc z} &= \mathbb{E}\big[\vc z\vc z^H\big] = \ma U\, \mathbb{E}\big[\ma R \underbrace{\ma U^{-1} {\vc m\vc m}^H\, \ma U^{-H}}_{=\hat{\ma H}_\vc m} \ma R^H\big]\ma U^H.
\end{align*}
Define the spectral covariance
\begin{align*}
\hat{\ma H}_{\vc z}
\coloneqq 
\mathbb{E}\big[\ma R\, \hat{\ma H}_{\vc m}\ma R^H\big],
\qquad 
\ma H_{\vc z}=\ma U\hat{\ma H}_{\vc z}\ma U^H.
\end{align*}
Since $\ma R$ is diagonal,
\begin{align*}
\hat{\ma H}_{\vc z}[n,m] = \mathbb{E}\big[\ma R[n,n]\, \ma R^*[m,m]\big]\, \hat{\ma H}_{\vc m}[n,m].
\end{align*}
Thus $\hat{\ma H}_{\vc z}$ is diagonal once we show $\mathbb{E}\big[\ma R[n,n]\ma R^*[m,m]\big]=\delta_{n,m}$. Each diagonal entry of $\ma R$ is either a Rademacher variable $\epsilon$ (real eigenvalue) or a random unit phase $e^{j\Psi}$ (complex eigenvalue), with conjugate pairs sharing $\Psi_l=-\Psi_k$. 
If $n=m$ then $|\ma R[n,n]|=1$ so the expectation equals $1$. For $n\neq m$, we have: 
\begin{itemize}
    \item $\mathbb{E}[\epsilon_n\epsilon_m]=0$ (independent real signs);
    \item $\mathbb{E}[e^{j(\Psi_n-\Psi_m)}]=0$ (independent phases);
    \item $\mathbb{E}[e^{2j\Psi_n}]=0$ (conjugate-phase pairs);
    \item $\mathbb{E}[\epsilon_n e^{-j\Psi_m}]
    =\mathbb{E}[\epsilon_n]\mathbb{E}[e^{-j\Psi_m}]=0$ (real/complex).
\end{itemize}
Hence we obtain
\begin{align*}
\mathbb{E}\big[\ma R[n,n]\, \ma R^*[m,m]\big]
=\delta_{n,m},
\end{align*}
so that $\hat{\ma H}_{\vc z}$ is diagonal and $\ma H_{\vc z} = \ma U\hat{\ma H}_{\vc z}\ma U^H$. Thus $\vc z$ is a DGWSS process.
\end{proof}

\subsection{Building Test Statistic Distributions with Surrogates}
Once surrogates are generated, test statistics need to be defined in order to verify the null hypothesis. In this work, we show two simple statistics that can be easily translated to a wide range of applications.
\subsubsection{Nodal Value} \label{node-test} From $K$ observations $\{\vc m_k\}_{k=1}^K$ we generate $L$ surrogate sets $\{{\vc m}_{k}^{(l)},k=1,\ldots,K\}_{l=1}^L$. The average empirical node signal $\vc m[n]=\frac{1}{K}\sum_{k=1}^K \vc m_k[n]$ is compared to the null distribution
\begin{equation}
\Pi(n)=\left\{\frac{1}{K}\sum_{k=1}^K {\vc m}_{k}^{(l)}[n], \quad l=1,\ldots,L\right\},
\end{equation}
identifying the $p$-value (one-tailed) under the null hypothesis as being $l'/(L+1)$ where $l'$ is the rank of $\vc m[n]$ among the sorted values of $\Pi(n)$.

This test procedure allows to detect node singularities that deviate from stationarity. Since we test for each node, an appropriate discount needs to be applied on the p-value. One could for instance use a Bonferroni correction \cite{dunn1961multiple} dividing the significance level $\alpha$ by $N$, but also implying that at least $N\lceil\frac{1}{\alpha}\rceil$ surrogates would need to be generated to make sure that the multiple-comparison adjusted significance level is reached.

\subsubsection{Covariance structure} \label{connect-test} Starting from $K$ observations $\vc m_k$, $k=1,\ldots,K$, we generate $L$ surrogate sets $\{{\vc m}_{k}^{(l)},k=1,\ldots,K\}_{l=1}^L$. We are interested in testing the relationship between pairs of nodes. To do so we compute the empirical covariance 
\begin{equation}
    \ma H_\vc m = \frac{1}{K} \sum_{k=1}^K \vc m_k \, \vc m_k^T
\end{equation}
and compare each entry $\ma H_\vc m[n,m]$ to the following null distribution
\begin{equation}
\Pi(n,m)=\left\{\frac{1}{K} \sum_{k=1}^K {\vc m}_{k}^{(l)}[n] \, {\vc m}_{k}^{(l)}[m], \quad l=1,\ldots,L\right\},
\end{equation}
identifying the $p$-value (one-tailed) under the null hypothesis as being $l'/(L+1)$ where $l'$ is the rank of $\ma H_\vc m[n,m]$ among the sorted values of $\Pi(n,m)$. 

\begin{figure*}
    \centering
    \captionsetup[subfigure]{justification=centering}
  \subfloat[Nodal timecourses\label{ar-n1-test}]{%
       \includegraphics[width=0.32\linewidth]{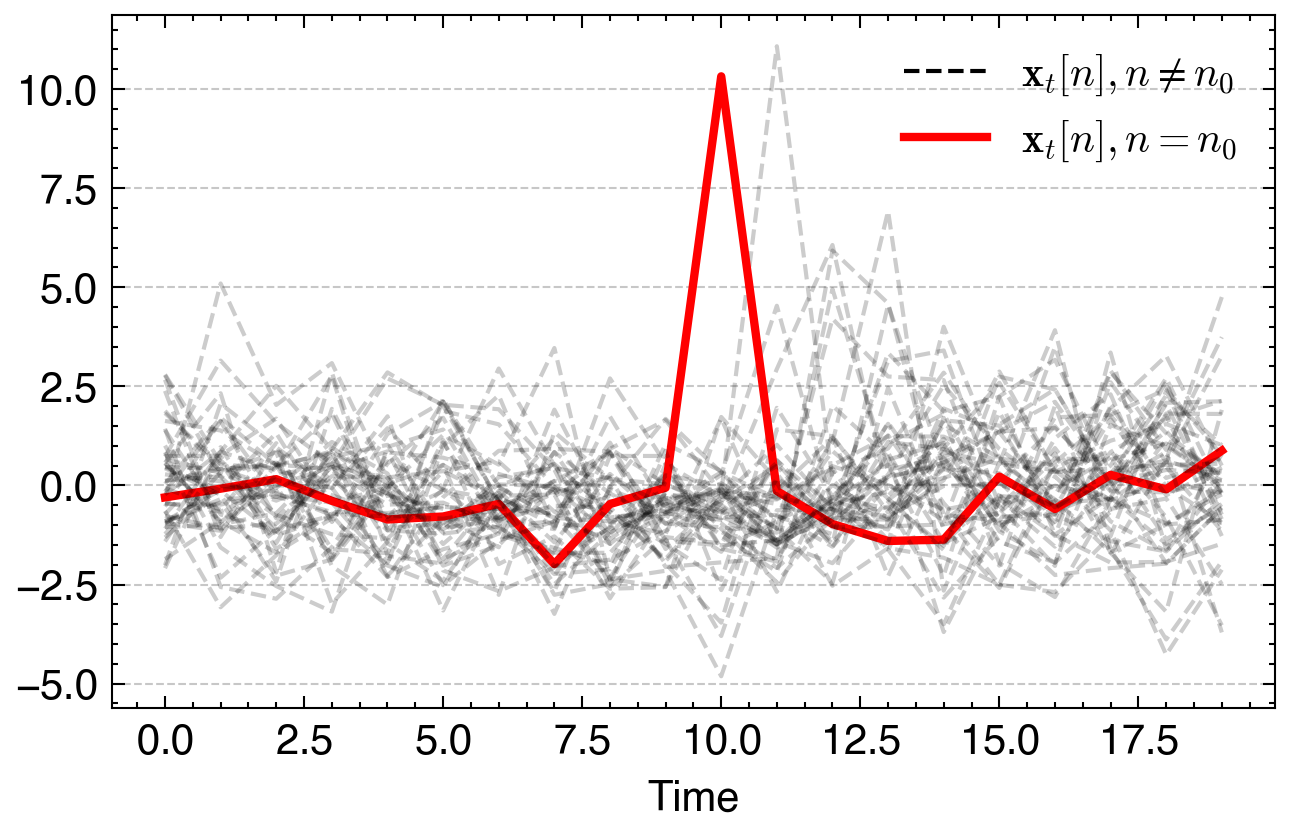}}
       \hfill
    \subfloat[Nodal null distributions \label{ar-n1-distrib}]{%
       \includegraphics[width=0.34\linewidth]{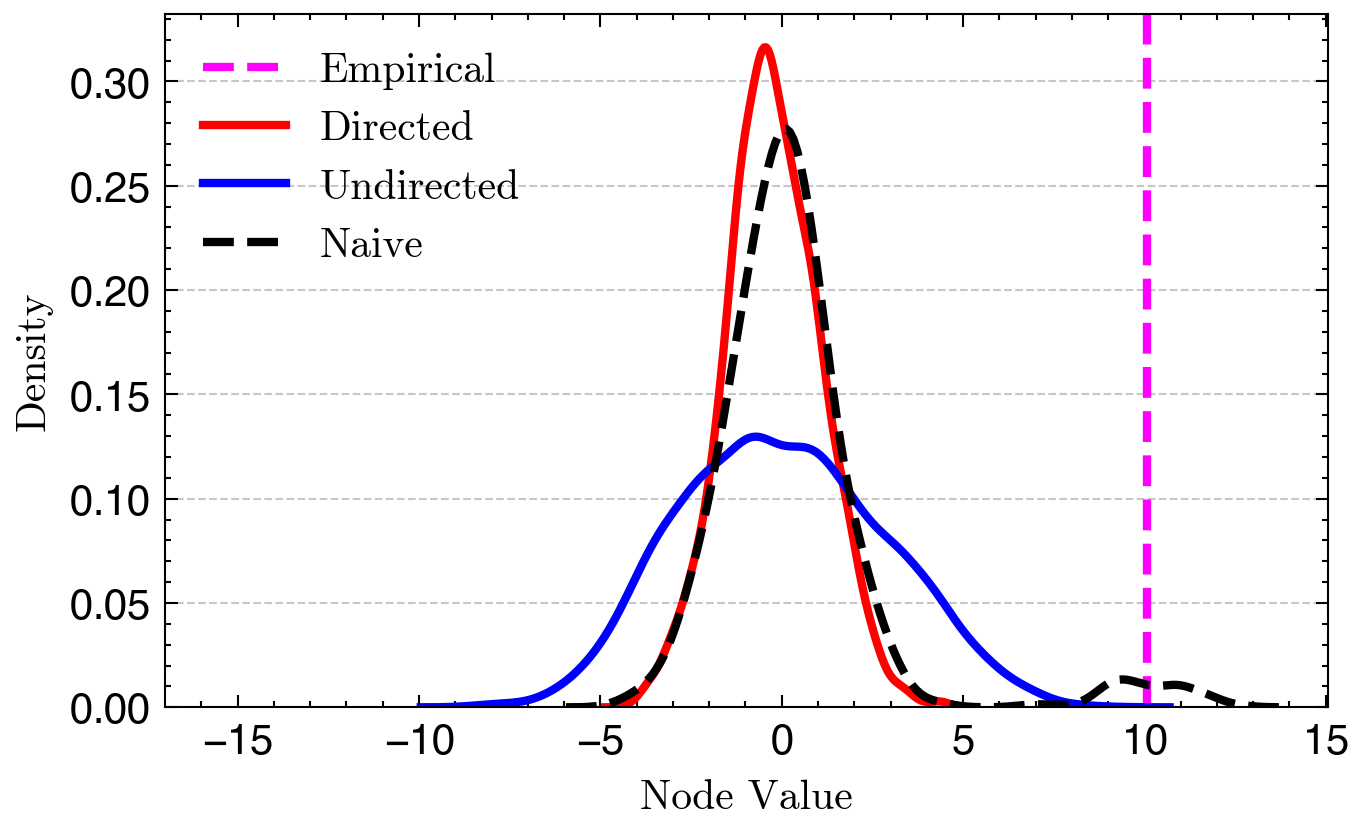}}
        \hfill
    \subfloat[ROC curves on $n_0$\label{ar-n1-roc}]{%
       \includegraphics[width=0.33\linewidth]{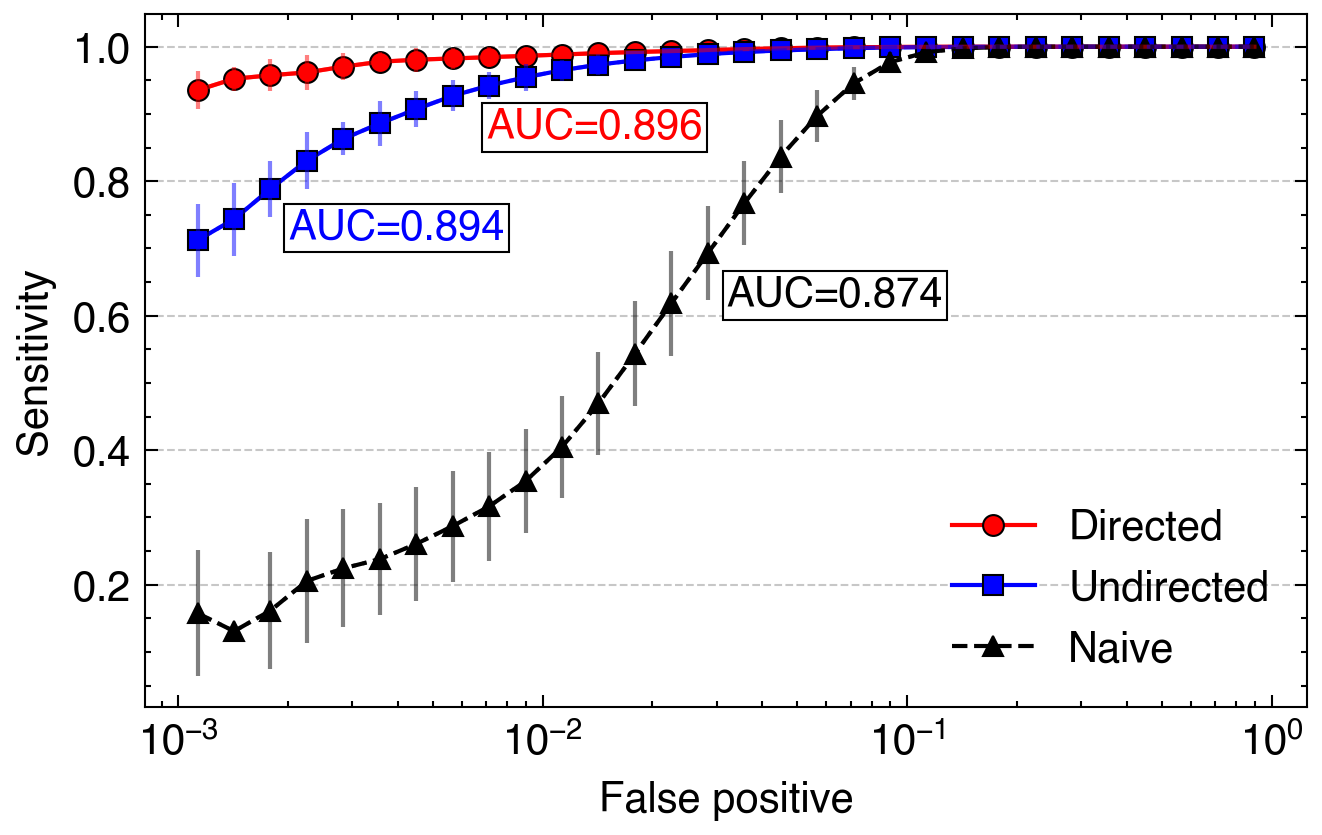}}
  \caption{(a)~Nodal timecourses of graph diffusion for different nodes with the irregular node ($n_0$) in red such that $\vct \delta_9[n_0]= 10$. (b)~Distribution of surrogate value for node $n_0$ at $t=9$, for one realization of the sequence. (c)~ROC curves for different $n_0$ with $\vct \delta_9[n_0]= 10$. Each curve results from $K=50$ repetitions and $L=999$ randomizations per repetition.}
\end{figure*}

\begin{figure*}
    \centering
    \captionsetup[subfigure]{justification=centering}
  \subfloat[Nodal timecourses\label{ar-multi-test}]{%
       \includegraphics[width=0.32\linewidth]{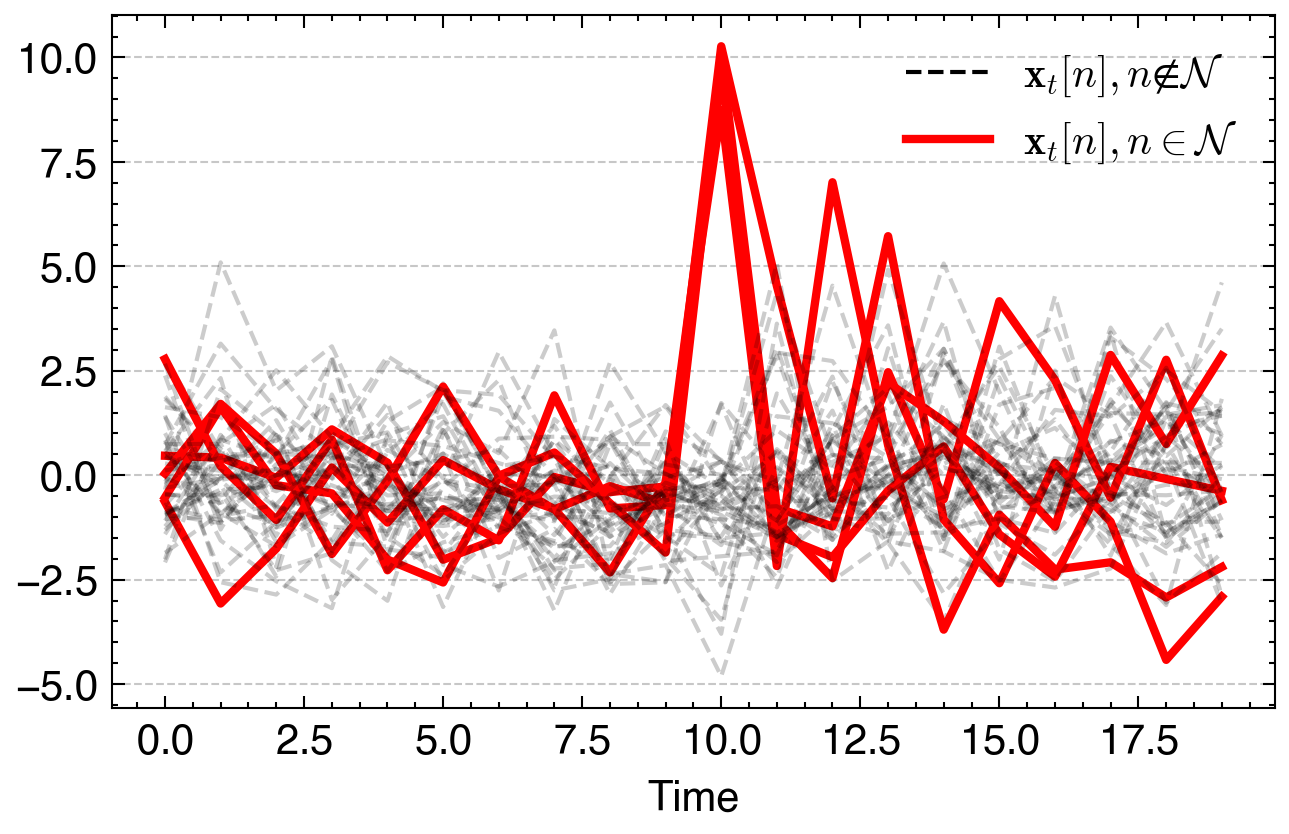}}
       \hfill
    \subfloat[ROC curves for $\text{card}(\mathcal{N})=5$ \label{ar-n5-roc}]{%
       \includegraphics[width=0.325\linewidth]{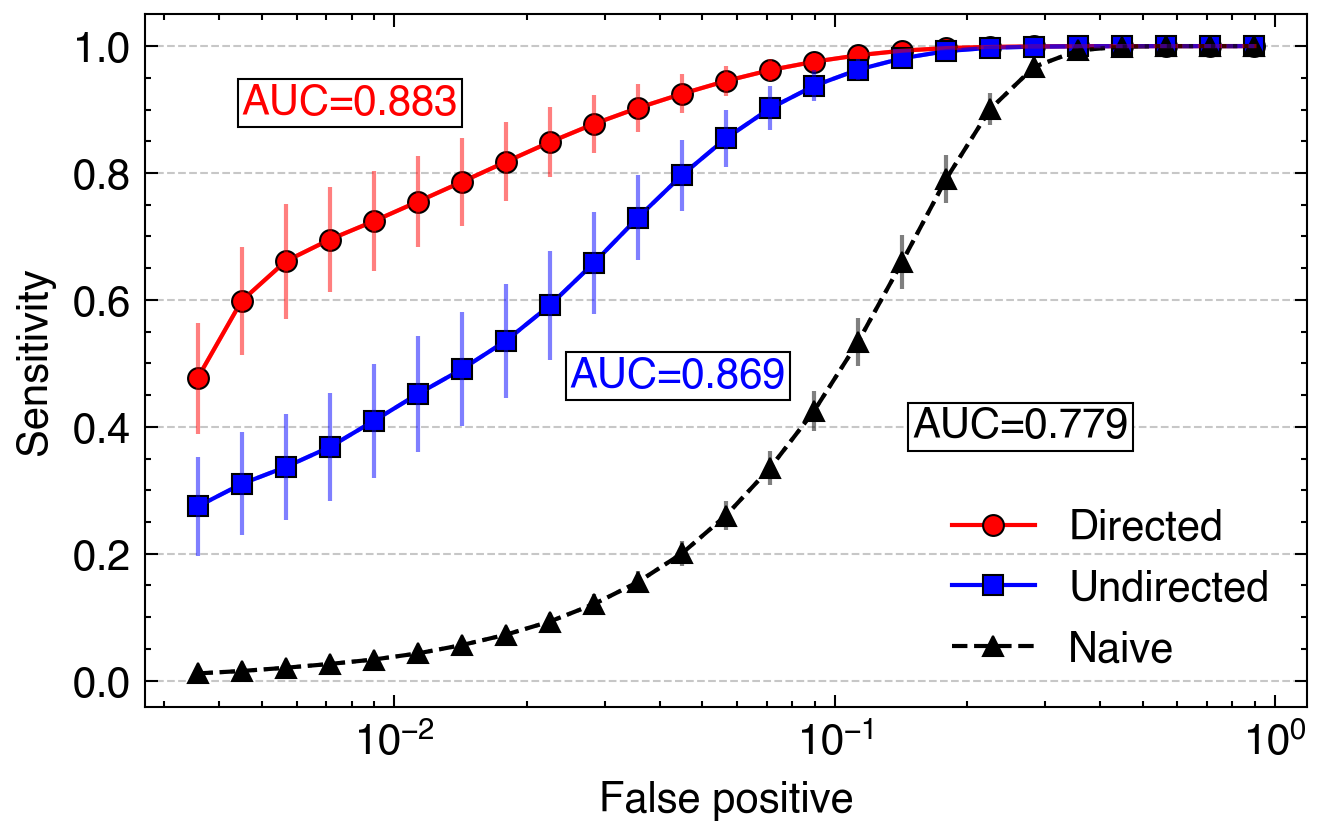}}
       \hfill
    \subfloat[AUC for different sizes of $\mathcal{N}$\label{auc_multiple}]{%
       \includegraphics[width=0.345\linewidth]{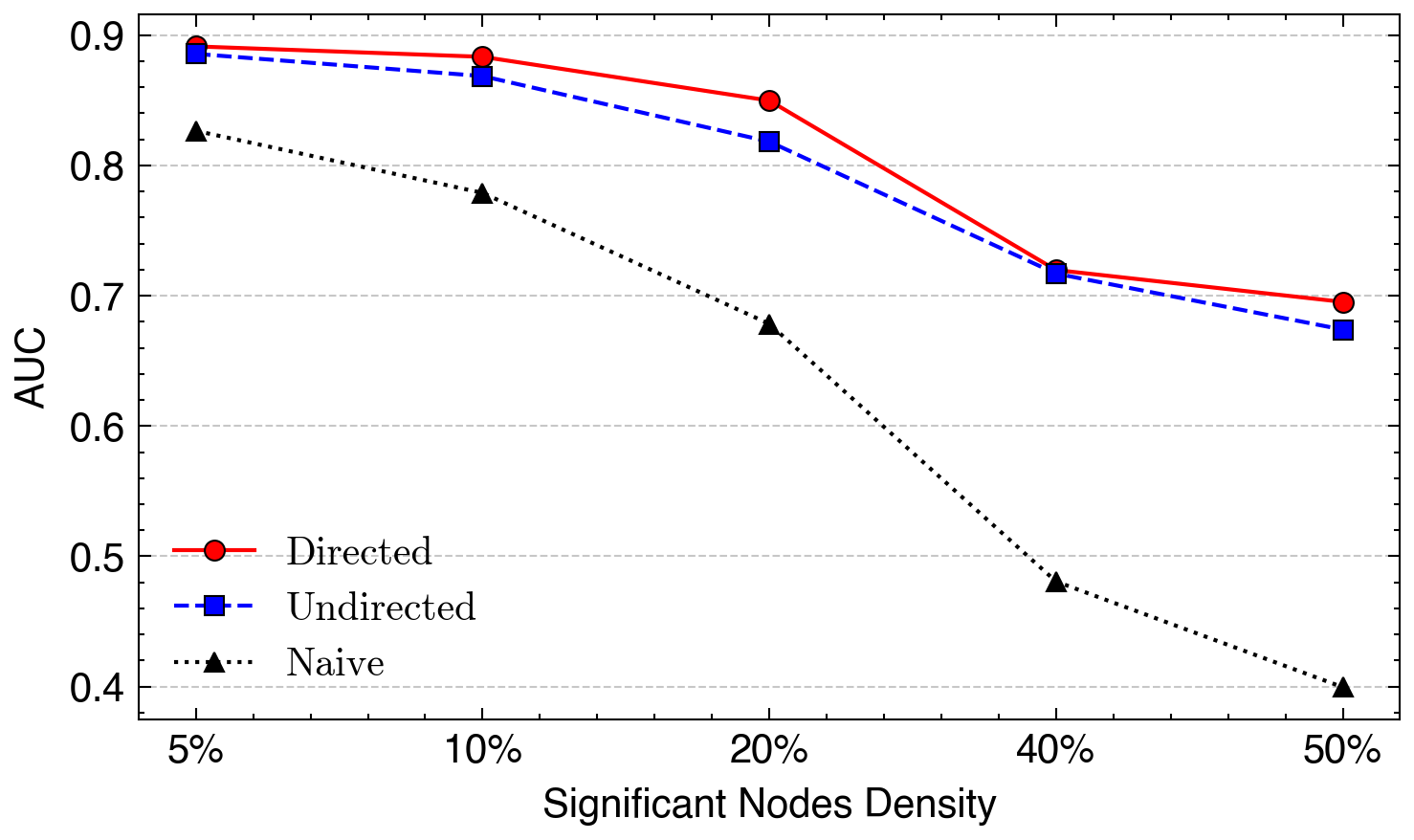}}
  \caption{(a)~Nodal timecourses of graph diffusion for a set of irregular nodes. (b)~ROC curves for different sets $\mathcal{N}$ of size $5$ with $\vct \delta_{9}[n]= 10$ for all $n\in \mathcal{N}$. Each curve results from $50$ repetitions and $999$ randomizations per repetition. (c)~AUC for various sizes of $\mathcal{N}$.}
\end{figure*}

This test allows to detect unexpected relationship between pairs of nodes. Similarly to the node value statistics, there is once again a need to compensate for the number of multi-comparisons. As the statistics is a symmetric function, we need to correct for $\left\lceil \frac{N(N-1)}{2} \right\rceil$ comparisons.
Following up, we give simple guiding examples for both of these tests.

\section{Experimental Results}
\label{sect:experimentals}
Having introduced the schemes for surrogate generation, we provide illustrative examples on their usage for directed and undirected graphs, compared with a baseline of naive permutation-based surrogates. The latter are obtained by uniformly randomly permuting entries of the original signal. As an immediate application of the previously introduced statistical tests, we detect nodal signals and covariances against a null of stationary graph signals. Then, in a more realistic setting, we consider timecourses generated from diffusion model \cite{tuerlinckx2001comparison} with additive disruptions on nodal signals. Throughout all three experiments, the employed graph will be the USA graph from \cite{seifert_digraph_2021, chan_hilbert_2025} \added{. For all experiments, the number of surrogates will be chosen such that the resulting p-value quantization, given by $1/(L+1)$, remains smaller than the significance level adjusted for multiple comparison and the initial significance level is conventionally set to $\alpha=5\%$ \cite{fisher1970statistical}.} Finally, we leverage our surrogate scheme in a real-world setting, with real graphs and graph signals, based on wind and temperature maps.
\subsection{Detection of Non-Stationary Signal Contributions} \label{test-node}
We address the detection of non-stationary nodal graph signal contributions by considering $K$ observations $\{ \vc m_k \}_{k=1,\ldots,K}$ of the following signal generation model:
\begin{equation}
    \vc x[n] = \vct \epsilon[n] + \vct \delta[n],
\end{equation}
with $\vct \epsilon$ is WN on a directed graph, and $\vct \delta$ deterministic signal contributions on ``irregular nodes'' that do not obey the DGWSS: 
\begin{equation}
    \vct \delta[n] = \left\{ \begin{array}{ll} 
    a, & n\in\mathcal{N},\\
    0, & n\notin\mathcal{N},
    \end{array} \right.
\end{equation}
where $\mathcal{N}$ is a subset of nodes. In our experiment, we set $a=10$ while $\vct \epsilon$ is the WN defined as in Sect.~\ref{wn-noise-directed} with respect to the directed graph. For the USA graph considered here, we have $\text{Var}(\vct \epsilon[n])\leq 3$ as shown in Fig.~\ref{wncovar_graph2}.

We first illustrate the detection on an arbitrary set of indices $\mathcal{N}$ with $\text{card}(\mathcal{N})=15$ as shown in Fig.~\ref{dirac-signal}. For each of the observations $\vc m_k$ ($K=20$), we generate $4999$ surrogate signals according to the proposed method for directed graphs, undirected graphs, or naive, leading to surrogate sets $\{{\vc m}_{k}^{(l)}, k=1,...,20\}_{l=1}^{4999}$. \added{A non-round number of surrogates (e.g., $L=4999$) is intentionally selected since including the observed statistic in the ranking leads to convenient p-value quantization levels of $1/(L+1)$. Similar choices of $L$ are adopted in the subsequent experiments.} We proceed by testing per node as described in Sect.~\ref{node-test} to obtain a nodal null distribution. When the empirical nodal signal has rank $t$ according to the null, its statistical significance is given by $1-t/(T+1)$ (Fig.~\ref{stats-dirac}). We verify that nodes with strongest significance, that is with the lowest p-values, belong to $\mathcal{N}$. We also observe a posteriori, that the least significant node in $\mathcal{N}$ ($=0.58$) is still more significant than the most significant node not in $\mathcal{N}$ ($=0.31$), implying perfect distinction. Results of the statistical significance for undirected-graph and permutation-based surrogates can be found in supplementary materials.

We next consider multiple instances of sets $\mathcal{N}$ with cardinality ranging from $10\%$ to $50\%$ of the number of nodes. Using the same number of surrogates and observations as in the illustrative example, we build a distribution under the null hypothesis as in Sect.~\ref{node-test} using different types of surrogates: for directed and undirected graphs, as well as naive permutation. We evaluate the performance of each surrogates using the accuracy score, defined as the proportion of correct predictions. The results are summarized in Fig.~\ref{stats-multi}\replaced{ and are corrected for multiple comparisons.}{. The initial threshold used for statistical significance is $\alpha=5\%$, which is corrected for multiple comparisons.} \added{For this and the following experiments, the results reported in the main text use Bonferroni correction. Results obtained using false discovery rate (FDR)\cite{benjamini1995controlling} correction are also provided in Supplementary Material.} Accuracy for directed graphs outperforms both undirected and naive. This short experiment validates the viability of surrogate on directed graphs. 

\subsection{Detection of Non-Stationary Covariance Structure}
\label{test-covar}
We now consider the following signal generation model 
\begin{equation}
    \vc x[n] \;=\; \vct \epsilon[n] + \vct \eta[n],
\end{equation}
where $\vct \epsilon$ is again WN on the known directed graph, and $\vct \eta$ of a zero-mean random process with a sparse covariance matrix $\ma H_{\vct \eta}$ where some entries are $1$ and thus are not DGWSS. Since the two processes are assumed to be uncorrelated, we have
\begin{equation}
    \ma H_\vc x \;=\; \ma H_{\vc \epsilon} + \ma H_{\vct \eta}.
\end{equation}
The signal realizations are denoted again as $\vc m_k$, $k=1,\ldots,K$. 

In Fig.~\ref{gt-connectivity}, we show a ground truth covariance matrix $\ma H_{\vct\eta}$. We generate again $24999$ surrogates and compute the null distribution of all elements of the empirical covariance matrix computed from $K=100$ observations. In Fig.~\ref{test-connectivity-directed} and Fig.~\ref{test-connectivity-undirected}, we show the detections for a statistical significance set at $\alpha=5\%$ respectively for directed- and undirected-graph surrogates, corrected for multiple comparisons. Most of the covariances  detected by directed-graph surrogates are TPs, while for the undirected-graph surrogates we observe less TPs and many more FPs, thus lowering the specificity of the test. Result of the detected pairs for permutation-based surrogates can be found in Supplementary Material.

The same testing scheme (same number of surrogates and $K=100$) is repeated on randomly selected empirical covariance matrices $\ma H_\eta$ of various density (from $1\%$ to $20\%$ of all possible connections). For each covariance matrix, an accuracy of detection can be computed from the maps of ground truth and detections. Finally aggregating the scores across repetitions, we show higher accuracies across all densities of the covariance matrices in directed model, than in undirected and permutation-based model Fig.~\ref{connectivity-distribution}. \added{Interestingly, the accuracy of the undirected surrogates is initially lower than that of the naive surrogates, but eventually surpasses it as the density increases. The null associated with the undirected model imposes graph-dependent covariance structures that does not match well with the simulated measurements that are perturbed DGWSS signals. Therefore, even at low density, the DGWSS structure of the empirical signal may induce false positives under a GWSS null, leading to lower accuracy. As the density increases, the undirected null becomes more appropriate, reducing false positive rate. By contrast, the permutation-based model assumes no graph dependence in its null, resulting in a smaller effect on the false positive rate across covariance densities (see also Supplementary Material).}

\subsection{Detection of Irregularities in Graph-Based Diffusion}
\label{test-diffusion}
Diffusion is commonly used to model signal evolution on a graph. The classical heat equation is the following differential equation
\begin{equation}
    \frac{\partial x(t)}{\partial t} = -\Delta x(t),
\end{equation}
where $\Delta$ is the Laplacian operator. The discretized heat equation on a graph is then given by
\begin{equation}
    \vc x_{t+1} = (\ma I - \ma L) \vc x_t,
\end{equation}
where $t$ is a time index and $\ma L$ the graph Laplacian. In directed graphs, we consider the random walk directed Laplacian $\ma L = \ma I - \ma D^{-1} \ma A$ with $\ma D$ the in-degree diagonal matrix of the adjacency matrix $\ma A$~\cite{singh_graph_2016}.

We now consider a signal generation model for a diffusion process $\vc m_{t}$:
\begin{equation}
    \vc x_{t} = \ma D^{-1}\ma A \vc x_{t-1} + \vct \epsilon_{t-1} + \vct \delta_{t-1},
\end{equation}
with $\ma A$ being the graph adjacency matrix, and contributions from a WN $(\vct \epsilon_t)_t$ as well as the graph signal $\vct \delta_t$ representing deterministic disruption that is to be detected. 

For the experimental setting, we re-use the USA graph from Fig.~\ref{dirac-signal} and set the length of the sequence to $T=20$ that is observed $K=50$ times. We thus obtain sequences $\vc m_{k,t}$ with $t=1,\ldots,T$ and $k=1,\ldots,K$, initialized by $\vc m_{k,0}$ that is a realization of WN on the directed graph, sampled from a multi-variate Gaussian distribution. From each sequence,  we generate $999$ directed, undirected and permutation based surrogates signals of the following form $\{{\vc m}_{k, t}^{(l)}, k=1,...,K, t=1,...,T\}_{l=1}^{999}$, and deploy them to build a null distribution of nodal signal as in Sect.~\ref{node-test}. Each node and timepoint is then tested. We show the feasibility of this scheme for different choices of $\vct \delta_t$.

We consider the regime where, given a set of nodes $\mathcal{N}$ and time index $t_0=9$, the disruption signal is defined as
\begin{equation}
\vct \delta_t[n]= \left\{ \begin{array}{ll} 
    a, & n\in \mathcal{N} \text{ and } t = t_0,\\
    0, & \text{otherwise}.
    \end{array} \right.  
\end{equation}
Practically, we set $a=10$ as in Sect.~\ref{test-node} and consider the WN $\vct \epsilon_t$ of the in-degree normalized USA graph for all $t=1,...,T$. 
We first show a generated graph signal observation in Fig.~\ref{ar-n1-test}, where $\mathcal{N}=\{n_0\}=\{1\}$ and $t_0=9$. We then vary the significance level $\alpha$ and measure specificity and sensitivity w.r.t.\ the ground truth, which allows to build the Receiver Operating Characteristic (ROC) curve for node $n_0$. Repeating the process for all the other nodes gives us a set of ROC curves (Fig.~\ref{ar-n1-roc}). We observe that surrogates built using the directed graph outperform both undirected and permutation-based surrogates in terms of sensitivity and false positive control.

With the same settings as when $\text{card}(\mathcal{N})=1$, we repeat over $50$ random sets $\mathcal{N}$ for a given size of $\mathcal{N}$ instead of repeating the test over all single nodes. We first show an observation of the timecourses in Fig.~\ref{ar-multi-test} and the associated ROC curves in Fig.~\ref{ar-n5-roc} both for $\text{card}(\mathcal{N})=5$. Then we also perform the same procedure but with increasing set sizes, such that $\text{card}(\mathcal{N})$ has node density varying from $5\%$ to $50\%$ and report the Area Under the Curve (AUC) of the ROC curves, as shown in Fig.~\ref{auc_multiple}
From this experiment of varying $\text{card}(\mathcal{N})$, we show clearly that regardless of the number of perturbations, the directed model allows for higher control on the false positives while remaining sensitive to significant points in time in comparison to both undirected and permutation-based models.

\begin{figure*}
    \centering
    \captionsetup[subfigure]{justification=centering}
  \subfloat[Detections using directed-graph surrogates\label{temperature-graph}]{%
  \includegraphics[width=0.29\linewidth]{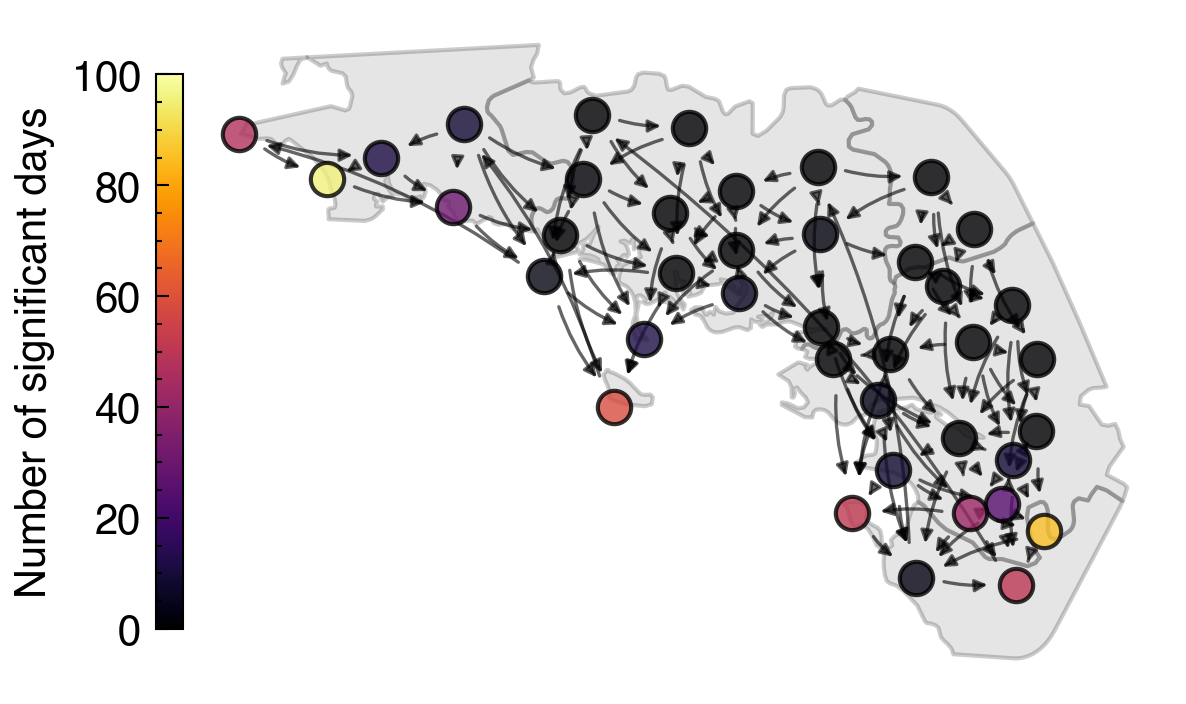}}
  \hfill
  \subfloat[Detections using undirected-graph surrogates\label{temperature-graph-undirect}]{%
  \includegraphics[width=0.25\linewidth]{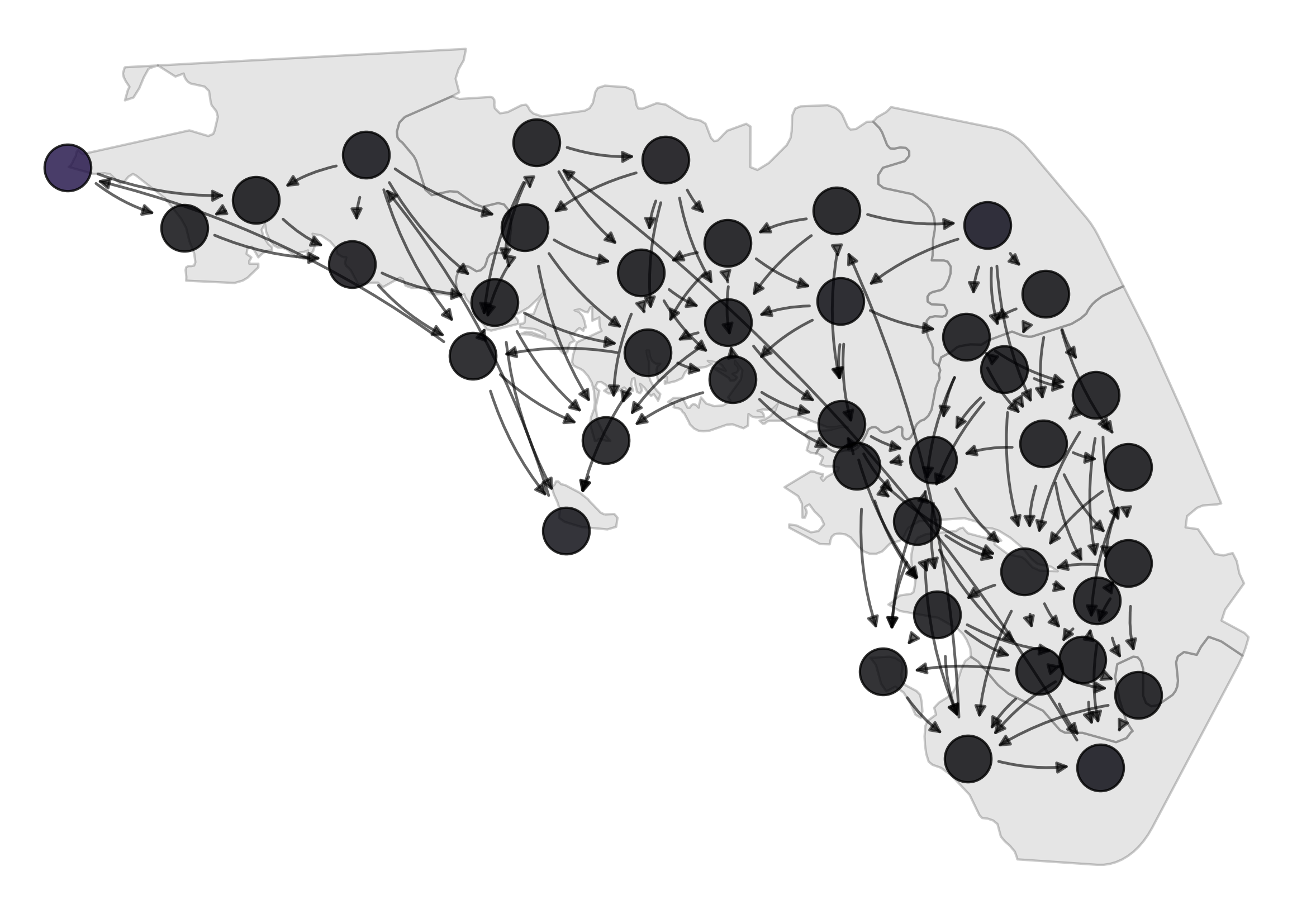}}
  \hfill
    \subfloat[Spectral covariance \label{autocorrelation}]{%
    \includegraphics[width=0.43\linewidth]{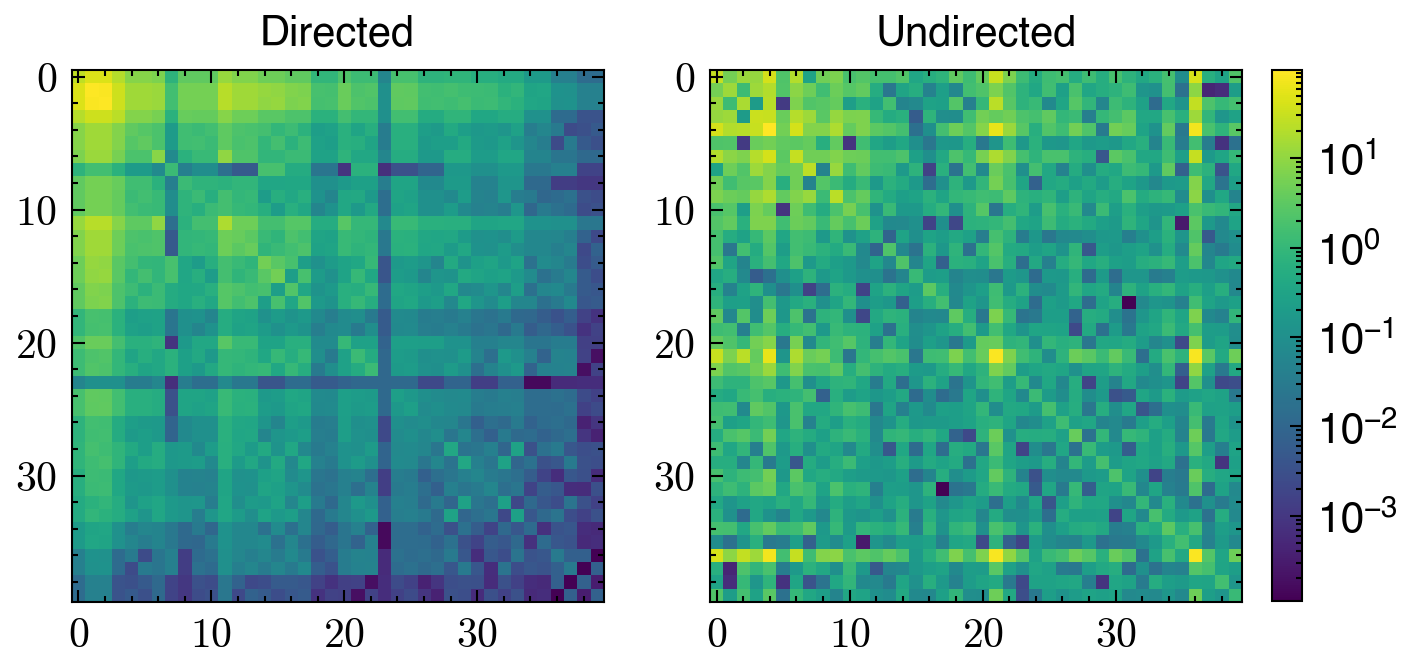}}
    
    \subfloat[Null distribution for inland station \label{noncoastal-station-surrogate}]{%
    \includegraphics[width=0.504\linewidth]{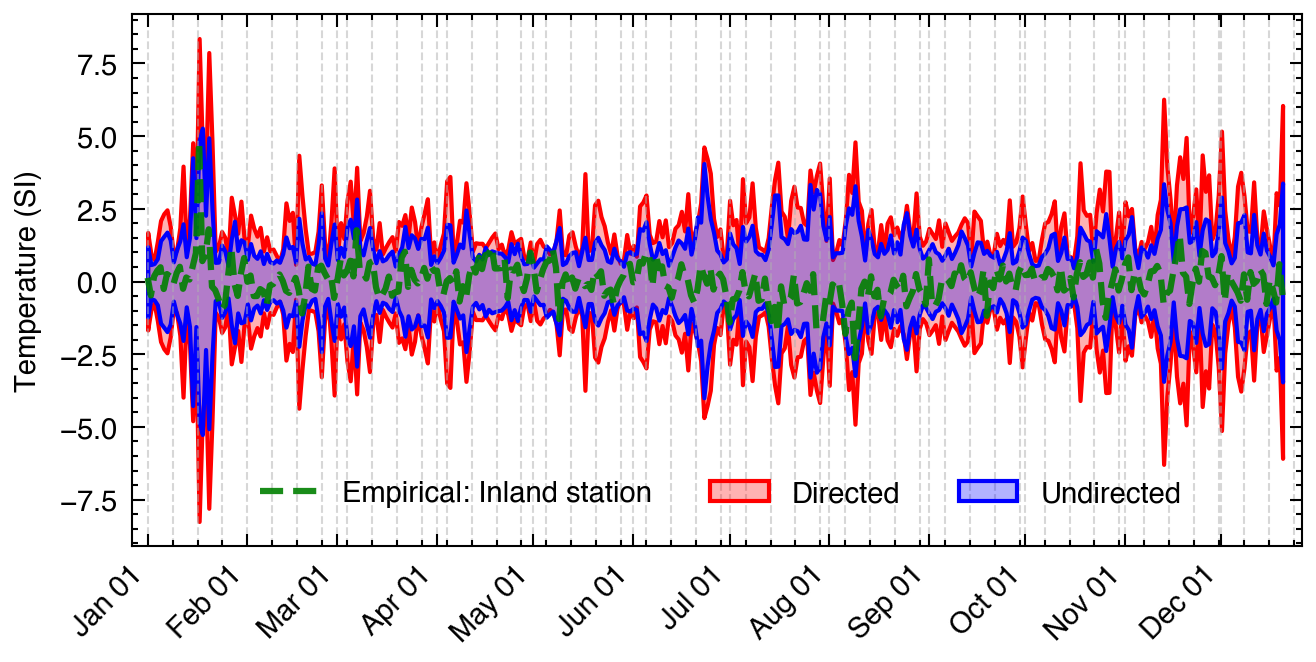}}
    \subfloat[Null distribution for coastal station \label{coastal-station-surrogate}]{%
    \includegraphics[width=0.496\linewidth]{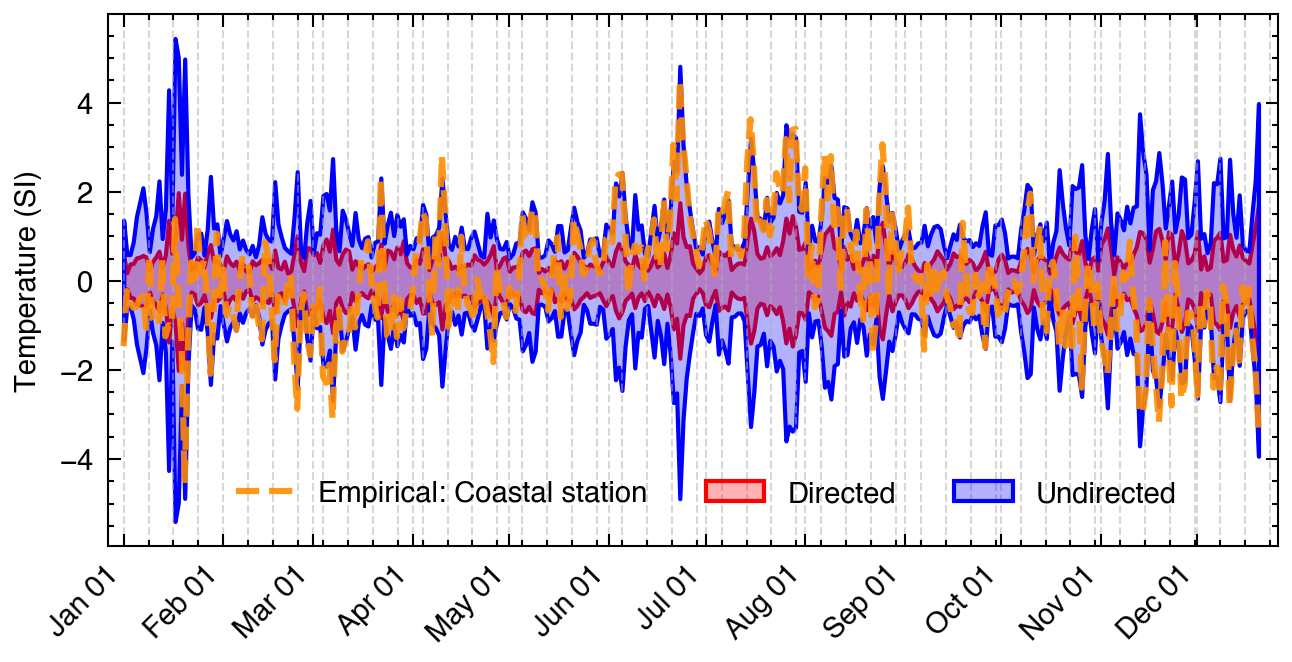}}
    
  \caption{(a)~Number of days with significant temperature detected by directed-graph surrogates. (b)~Number of days with significant temperature detected by undirected-graph surrogates. (c)~Spectral covariance matrix computed from covariance between stations. (d)~The measured temperatures (normalized) over the year for the inland station PLOERMEL (green), and its corresponding directed-graph (red) and undirected-graph surrogates (blue) at $[-\sigma, \sigma]$. (e)~The measured temperatures (normalized) over the year for the coastal station PLOVAN (orange), and its corresponding directed-graph (red) and undirected-graph surrogates (blue) at $[-\sigma, \sigma]$.}
\end{figure*}

\subsection{Application to Temperature Data on Sensor Network} \label{application} 
Finally, we apply the framework to real-world temperature data collected in 40 meteorological stations in the North-Western Coast of France over the full year of 2023 (Fig.~\ref{temperature-graph}), which yields $\vc m_k$, $k=1\,\ldots,354$, measurements after removing missing data. The graph is defined by associating nodes to stations. The edge weights derived from distance; i.e., for the edge $(n,m)$ the weight is $e^{-\lambda ||\vc p_n-\vc p_m||_2}$, where $\vc p_n$ represent the geographic coordinates (longitude and latitude) of node $n$. Additionally, we consider the $5$-nearest neighbours graph and choose the direction of the edge according to latitude; i.e., from lower to higher latitude as in \cite{marques_signal_2020, shafipour_directed_2019, iraji_wss_2025}. We ensure diagonalizability and invertibility of the GSO through small perturbation \cite{seifert_digraph_2021}, which is effectively adding 4 edges to a total of 149 edges. The undirected counterpart is obtained by symmetrizing the adjacency matrix $A$, yielding $\frac{1}{2}(A+A^T)$. The signal is further preprocessed by first centering over time, and then over nodes. This leads to a data matrix of size $40\times 354$. In Fig.~\ref{autocorrelation}, the spectral covariance matrix is shown for the full graph signals. The obtained stationary level is $\kappa(\ma H_\vc m)\approx 0.41$, $\bar{\kappa}(\ma H_\vc m)\approx 0.49$, for respectively directed and undirected case showing that the temperature timecourses are non-stationary with respect to the directed graph and its undirected counterpart.

We now seek to detect irregularity in the measurements that leads to the aforementioned non-stationarity considering a null hypothesis that assumes stationarity. We generate $9999$ surrogates per day to build a null distribution per node and per timepoint with initial significance level of $\alpha=5\%$ which is adjusted for multiple comparisons. In Figs.~\ref{temperature-graph} and \ref{temperature-graph-undirect}, the number of days for which the null is rejected are illustrated, obtained using surrogates on the directed and undirected graph, respectively. From undirected graph surrogates, a small number of irregularities are found (at most $19$ out of $354$ days). In contrast, for directed graph surrogates, we find that mainly coastal stations present irregularities (up to $90$) while almost none of the inland stations do. Such irregularities are present all over the year especially during summer and winter. Moreover, we can observe from Figs.~\ref{noncoastal-station-surrogate} and \ref{coastal-station-surrogate} that the null distributions presented by directed-graph surrogates are wider (larger variance) than undirected ones on non coastal stations while the converse is instead true for coastal stations. 

While we have no ground truth, a tentative explanation for these results can \added{be attributed to unmodeled}\deleted{originate from the orientation of the edges from lower-to-higher latitude, and the geography of the observed region, from land and higher altitudes in the north, to sea in the south.
Unmodeled} marine influences affecting coastal stations\replaced{. Owing to the high thermal inertia of the sea, coastal regions often exhibit temperatures that differ from other areas~\cite{scheitlin_maritime_2013}, which is likely}{are likely} responsible for the increased number of detections that do not obey the stationarity assumptions. This effect is more pronounced for the directed graph, whereas the undirected graph may partially mitigate marine influences due to the spatial similarity among surrogate signals at neighboring coastal nodes.

\subsection{Implementation}
Documented code is publicly available on a repository\footnote{\url{https://github.com/MIPLabCH/Graph-NonParam-Stats}} that contains an implementation in Python that can reproduce all algorithms and figures of this paper. 

\section{Discussion \& Outlook}
We revisited and leveraged the concept of stationarity on directed graphs. First, we show that white noise on directed graphs comes with a graph-specific, non-diagonal covariance matrix. Second, we propose a novel method for non-parametric hypothesis testing that exploits phase randomization of signals in the graph spectral domain. Finally, we demonstrate, using various examples with synthetic and real data, the benefits of considering surrogates on directed graphs compared to those for undirected graphs or naive permutation that do not fully exploit the graph structure. Here, we highlight several points for extension and discussion that could further increase the applicability and impact of the proposed framework.

Our starting point is that the covariance matrix is diagonalized by the eigendecomposition of the GSO, rather than only block-diagonal as in previous work~\cite{iraji_wss_2025}. Therefore, the covariance structure can be summarized by a length-$N$ PSD similar to the definition of GWSS. Moreover many real graphs are often strongly connected, enabling diagonalization of the GSO \cite{sevi_harmonic_2023}. In cases where this is not possible, recent graph adjustment algorithms~\cite{seifert_digraph_2021, stankovic_zero-padding_2023} may be applied, all in all to overcome the computational instability of the JNF. Most importantly, as demonstrated in recent work on the graph Hilbert transform~\cite{chan_hilbert_2025}, the JNF prevents the emergence of phase. For instance, the GSO for a directed path does not lead to conjugate eigenvectors nor does it provide phase information that can be exploited for ``shifting'' operations. \added{As a result, directional information encoded through phase shifts cannot be exploited by surrogate generation.}

These considerations relate to the distinction with graphical causal models that impose acyclicity of the graph \cite{pearl2009causality}. The rationale is that  directed acyclic graphs allow non-ambiguous identification of statistical dependencies. For our framework, such graphs would lead to a non-diagonalizable GSO requiring  adjustment as mentioned before. Specifically, the emergence of phase of the GFT coefficients, which is essential for the proposed randomization scheme, requires the presence of cycles in the graph~\cite{chan_hilbert_2025}. Therefore, both frameworks come with their own assumptions and applications.

The proposed approach also differs fundamentally from the surrogate algorithm introduced in \cite{belda_new_2019}. Although the authors employ a Hermitian Laplacian formulation, which to some extent can accommodate for directionality, they do not provide a constructive procedure to obtain a Hermitian Laplacian from a directed graph. The underlying construction relies on symmetric edge weights, with directionality encoded by the sign of the imaginary part which is selected by maximizing smoothness criterion on the observed signals. It is therefore, in general, not applicable to directed graphs. Importantly, the method in \cite{belda_new_2019} is applied on complex-valued observations for which the notion of graph stationarity was not defined. As a result, the corresponding surrogates are not grounded in stationarity considerations and do not leverage the notion of directed graph stationarity. Furthermore, the surrogate signals in \cite{belda_new_2019} are introduced for data augmentation purposes, rather than for statistical hypothesis testing, which is a central objective of the present work.

The framework has been devised for real-valued graph signals. However, since the definition of DGWSS extends to complex-valued random vectors, the methodology can be readily broadened to complex-valued data, both for measurements and surrogates. In that case, the test statistic needs to be adapted to deal with the complex nature of the signals; e.g., by considering separately the real and imaginary parts, or resort to circular statistics~\cite{jammalamadaka_topics_2001}. 

In addition to testing nodal values or covariance structures, various other test metrics have been defined in \cite{theiler_testing_1992, borgnat_testing_2010} to measure chaotic behaviours of non linear systems, such as Lyapunov exponents. These statistics could be adapted to the graph setting and employed within our framework to test for non-linear behavior on graphs. 

In this work, multiple-comparison effects are handled using standard correction procedures, such as the Bonferroni \added{and FDR} method, applied to the p-values obtained from the proposed surrogate-based nonparametric tests. In a similar way, recent advances in multiple hypothesis testing strategies on networks~\cite{jian2025graph} could be used, exploiting the underlying graph structure to control for false discovery rate while improving detection power.

Additionally, one can observe that in our method, no explicit assumption with regards to the probability distribution of the random signal was made. However, similarly to phase-randomized surrogate generation for temporal signals~\cite{theiler_testing_1992}, the present approach implicitly tests observations against Gaussian distributed surrogates. This arises from the fact that phase randomization only preserves autocovariance, such that, when considering a large number of surrogates, the resulting phase-randomized surrogates are asymptotically Gaussian due to the central limit theorem \cite{Maiwald2008}. As originally proposed in \cite{theiler_testing_1992}, an amplitude-adjusted surrogate generation scheme can be employed to extend the null hypothesis to non-Gaussian processes.

\added{Throughout this work, we employ the adjacency matrix as the GSO, following a previous study that introduced stationary process for directed graphs using the adjacency operator \cite{iraji_wss_2025}. Nevertheless, the directed graph Laplacian operator could also be employed to characterize stationarity, in a manner analogous to the GWSS for undirected graph \cite{perraudin_stationary_2017}. In both cases, the associated spectral representation relies on the eigen- or Jordan decomposition. By contrast, alternative decompositions based on singular-values \cite{wei_vertex-frequency_2025} or optimized orthogonal bases \cite{shafipour_directed_2019} produce real-valued singular values and components, which hinders an immediate extension of the uniform phase randomization mechanism central to our framework. }\replaced{This observation would instead motivate the investigation of}{Future work could study the framework by considering} alternative matrix factorizations \added{that possess spectral conjugate structures}, such as the Schur decomposition \cite{xiao_joint_2023} which comprise 2-by-2 matrix of complex conjugate eigenvalues pairs, or the polar decomposition \cite{kwak_frequency_2024},  which exploits a real-valued unitary factor whose eigenvalues lie on the unit circle and come in conjugate pairs. These factorizations may enable other types of randomizations that also leverage conjugate pairs. 


%

\section*{Supplementary Material}
Additional plots are listed in Supplementary Material.




\ifCLASSOPTIONcaptionsoff
  \newpage
\fi



\bibliographystyle{ieeetr}
%





\bibliography{references}

\end{document}